\renewcommand*{\backrefalt}[4]{\ifcase #1 No citations. \or Cited on page #2. \else Cited on pages #2. \fi}
\newcommand{\norm}[1]{\Vert {#1} \Vert}
\newtheorem{theorem}{Theorem}[section]
\newtheorem{proposition}{Proposition}[theorem]
\newcommand{\R}{\mathbb{R}}
\newcommand{\rms}{\mathrm{RMS}}
\newcommand{\softmax}{\text{softmax}}
\theoremstyle{definition}
\newtheorem{definition}{Definition}[section]
\definecolor{phil_green}{rgb}{0.1,0.8,0.2}
\definecolor{laker_blue}{rgb}{0.1,0.2,0.8}
\definecolor{pres_red}{rgb}{0.93, 0.3, 0.17}
\definecolor{leloy_orange}{rgb}{0.9, 0.5, 0}
\title{Training Transformers\\with Enforced Lipschitz Bounds}
\renewcommand{\@fnsymbol}[1]{\ifcase#1\or \textbf{*}\or 1\or 2 \or 3\fi}
\author{
  Laker Newhouse\thanks{Equal contribution. Correspondence to \texttt{lakern@mit.edu}}
  \\MIT CSAIL
    \And
  R. Preston Hess\footnotemark[1] %
  \\MIT BCS
    \And
  Franz Cesista\footnotemark[1] %
  \\Independent
    \AND
  Andrii Zahorodnii
  \\MIT BCS
    \And
  Jeremy Bernstein
  \\MIT CSAIL
    \And
  Phillip Isola
  \\MIT CSAIL
}
\begin{document}

\maketitle

\begin{abstract}

Neural networks are often highly sensitive to input and weight perturbations. This sensitivity has been linked to pathologies such as vulnerability to adversarial examples, divergent training, and overfitting. To combat these problems, past research has looked at building neural networks entirely from Lipschitz components. However, these techniques have not matured to the point where researchers have trained a modern architecture such as a transformer with a Lipschitz certificate enforced beyond initialization. To explore this gap, we begin by developing and benchmarking novel, computationally-efficient tools for maintaining norm-constrained weight matrices. Applying these tools, we are able to train transformer models with Lipschitz bounds enforced throughout training. We find that optimizer dynamics matter: switching from AdamW to Muon improves standard methods---weight decay and spectral normalization---allowing models to reach equal performance with a lower Lipschitz bound. Inspired by Muon's update having a fixed spectral norm, we co-design a weight constraint method that improves the Lipschitz vs. performance tradeoff on MLPs and 2M parameter transformers. Our \textless 2-Lipschitz transformer on Shakespeare text reaches validation accuracy 60\%. Scaling to 145M parameters, our \textless 10-Lipschitz transformer reaches 21\% accuracy on internet text. However, to match the NanoGPT baseline validation accuracy of 39.4\%, our Lipschitz upper bound increases to $10^{264}$. Nonetheless, our Lipschitz transformers train without stability measures such as layer norm, QK norm, and logit tanh softcapping.

\end{abstract}

\section{Introduction}

Lipschitz bounds for neural networks---bounds on the sensitivity of the model to input perturbations---are of interest for their effect on generalization and robustness \citep{bartlett2017spectrallynormalizedmarginboundsneural,tsuzuku2018lipschitzmargintrainingscalablecertification} and for applications such as differential privacy \citep{bethune2024dpsgdclippinglipschitzneural}. Seminal work \citep{arjovsky2016unitaryevolutionrecurrentneural,cisse2017parseval,yoshida2017spectralnormregularizationimproving,anil2019sorting} enforces Lipschitz bounds beyond initialization for MLPs, RNNs, and GANs, but for transformers, the closest related work, LipsFormer \citep{qi2023lipsformer}, does not constrain the weight matrices during training. Without constraints, large-scale transformer training may encounter instabilities, which has been attributed to attention and output logits growing too large \citep{wortsman2023smallscaleproxieslargescaletransformer,dehghani2023scalingvisiontransformers22}. Can enforced Lipschitz bounds benefit transformers, too? Specifically, we ask:

\begin{center}
    \textit{Can transformers with small, enforced Lipschitz bounds perform well?\\
    How does the weight constraint method affect the Lipschitz versus performance tradeoff?}
\end{center}

\begin{figure}[t]
  \centering
  \includegraphics[width=1\linewidth]{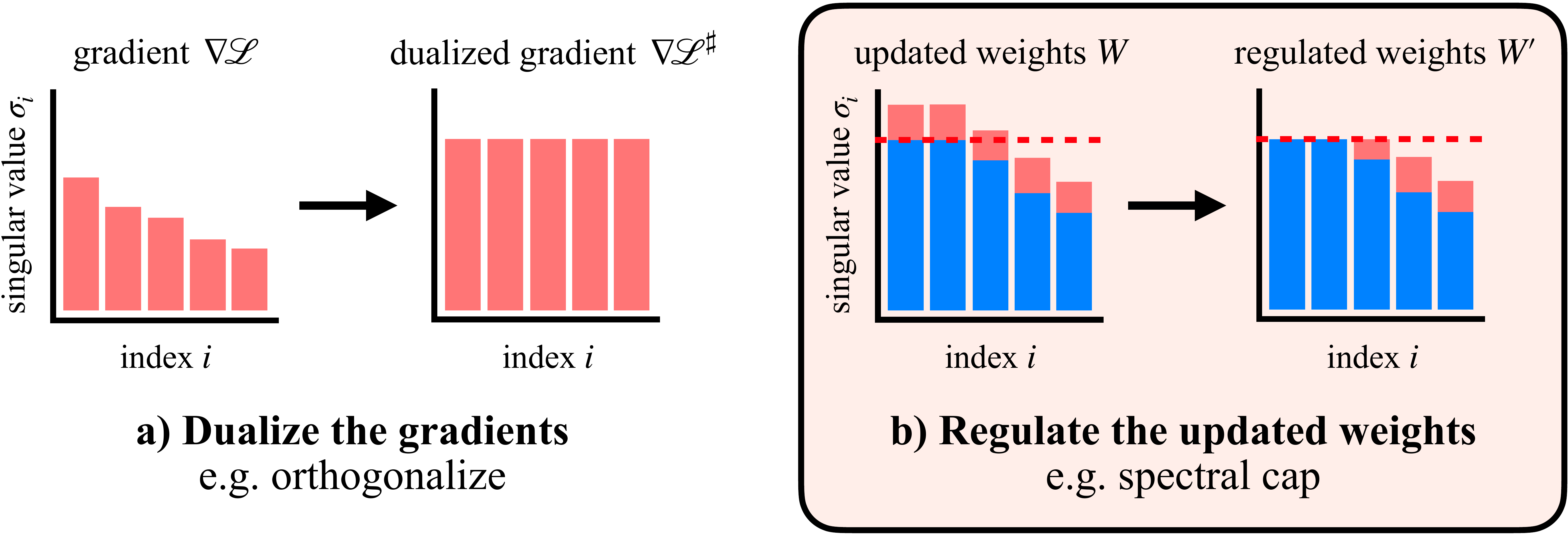}
  \caption{\textbf{To train fast and stably, regulate the gradients and also the weights.} Current research is considering efficient computational primitives for spectrally regulating gradient updates, e.g. Muon \citep{jordan2024muon}. There is an opportunity to construct similar techniques to spectrally regulate the weights themselves. While weight decay with parameter $\lambda$ does implicitly contract singular values by a fixed proportion $U \Sigma V^\top \mapsto U (1 - \lambda) \Sigma V^\top$, there is a broader family of possible singular value maps $U \Sigma V^\top \mapsto U f(\Sigma) V^\top$ for any nonnegative elementwise function $f$. For example, in this paper, we introduce \textit{spectral cap}, a method to efficiently clip singular values above a threshold $\sigma_\text{max} > 0$.} %
  \label{fig:pareto}
\end{figure}

Large scale training often suffers from instabilities such as exploding attention logits, bandaged over by methods such as QK norm \citep{henry2020querykeynormalizationtransformers,dehghani2023scalingvisiontransformers22} or the recent MuonClip optimizer for 1T parameter scale training \citep{kimik2_2025}. We hypothesize that, at this scale, enforcing a Lipschitz bound may offer a direct resolution to loss spikes.

Enforcing Lipschitz bounds on a transformer is challenging because transformers include components that are not globally Lipschitz, such as self-attention \citep{kim2021lipschitzconstantselfattention}. We build on \citet{large2024}'s work, which, similar to LipsFormer, enables Lipschitz continuity by reparameterizing residual connections and modifying self-attention; however, the full story is elusive. LipsFormer goes further than \citet{large2024} to eliminate layer norm \citep{ba2016layernormalization}, but the \href{https://github.com/IDEA-Research/LipsFormer/blob/main/models/lipsformer_swin.py#L205-L206}{official implementation} may make Lipschitz bounds impossible by setting $\epsilon = 0$ in QK norm. In contrast, we remove activation normalization to explore whether training can proceed with no stability measures.

To develop a toolkit for training transformers with an enforced Lipschitz bound, in \cref{sec:comparing-methods} we compare several methods for constraining weight norm. Surprisingly, we find that optimizer choice matters: standard methods such as weight decay \citep{krogh1991simple} and spectral normalization \citep{yoshida2017spectralnormregularizationimproving} improve a Lipschitz versus performance tradeoff more with Muon \citep{jordan2024muon} than with AdamW \citep{loshchilov2019decoupledweightdecayregularization}. We see improvement under Muon for MLPs trained on CIFAR-10 \citep{krizhevsky2009cifar} and corroborate this improvement on 2M parameter transformers trained on Shakespeare text \citep{karpathy_nanoGPT}.

Beyond standard methods, we are inspired by a property of the Muon optimizer---its weight updates have small, known spectral norm---to design a weight constraint method called \textit{spectral soft cap}, which enforces a desired maximum spectral norm $\sigma_\text{max}$ by approximating the map $\sigma \mapsto \min(\sigma_\text{max}, \sigma)$ on all singular values $\sigma$ in parallel by iterating odd polynomials on the weights. \cref{thm:spectral-soft-cap-bound} proves that spectrally capping the singular values bounds weight norm when training with Muon; we provide no theoretical guarantee for AdamW because the spectral norm of its update is not controlled. For AdamW, we explore a second technique that may be better suited to low stable rank updates, although we do not provide provable guarantees. At every step, this technique finds the largest weight singular value and sets it to $\sigma_\text{max}$. In analogy with a hammer that strikes the nail that sticks out the most, we call this technique \textit{spectral hammer}. Our experiments suggest that the most effective combination is Muon with spectral normalization or spectral soft cap, while for the Adam optimizer the only technique that elicits a competitive performance versus Lipschitz tradeoff is spectral hammer.

In \cref{sec:enforcing-for-transformers}, we scale up enforced weight constraint methods to the NanoGPT speedrun benchmark \citep{modded_nanogpt_2024}, training 145M parameter transformers to competitive performance without layer norm or QK norm. We train a $10$-Lipschitz transformer to 21.2\% validation accuracy, compared to the non-Lipschitz baseline of 39.4\% validation accuracy. However, to reach a competitive accuracy of 39.4\%, our global Lipschitz upper bound becomes astronomical at $10^{264}$. While \citet{fazlyab2023efficientaccurateestimationlipschitz} describe ways to tighten Lipschitz bounds, inspecting the maximum activation norms reveals that our model operates far from the worst case. On a particular batch of 393K tokens, the non-Lipschitz baseline has maximum activation entry 148,480 while the $10^{264}$-Lipschitz transformer has maximum activation entry 160. Empirically small activations in Lipschitz-constrained transformers may present an opportunity for low-precision training and inference.

Our code is available at \url{https://github.com/Arongil/lipschitz-transformers}, and our data is available at \url{https://huggingface.co/phess2/lipschitz-transformers}.

In summary, \textbf{our contributions are as follows:}
\begin{itemize}
    \item We train transformers with enforced Lipschitz constraints up to 145M parameters---demonstrating feasibility for full weight matrix constraints in transformer training---including a \textless 10-Lipschitz transformer that achieves 21\% accuracy on FineWeb10B internet text and a \textless 2-Lipschitz transformer that achieves 60\% accuracy on Shakespeare text.
    \item We present evidence that weight decay and spectral normalization yield greater benefits when trained with the Muon optimizer compared to AdamW, matching accuracy with smaller Lipschitz bound. We verify standard robustness properties hold when training with Muon.
    \item We introduce two weight norm constraint techniques: \textit{spectral soft cap} and \textit{spectral hammer}. Out of weight regularization methods for AdamW, spectral hammer elicits the most competitive Lipschitz-constrained performance. For Muon, we prove spectral soft cap bounds weight norm and find that it performs similarly or slightly better than spectral normalization. 
\end{itemize}

\section{Related work}
\label{sec:related-work}

There is now a large literature on the many potential and realized benefits of Lipschitz neural networks \citep{bethune2022pay,bethune2024deep,pmlr-v137-rosca20a}. Input--output Lipschitz certificates may be useful in deployment scenarios where there is a benefit to having strong robustness with respect to input perturbations. Examples include robotic control \citep{neural-fly,Wang2019VerificationON}, classification in the presence of adversarial input perturbations \citep{szegedy2013intriguing}, and in protocols for AI safety \citep{brown-cohen2024scalabale}. Input--output Lipschitz certificates are also used in certain generalization guarantees for deep networks \citep{bartlett2017spectrallynormalizedmarginboundsneural,neyshabur2018a,dherin2022why}.

Similarly, weight--output Lipschitz certificates may be useful in situations where there is an interest in perturbing the weights of a neural network without incurring unstable output behaviour. A prime example is for stable \citep{qi2023lipsformer,flynn2017duality} and scalable \citep{large2024} training. The recent MuonClip optimizer \citep{kimik2_2025} similarly addresses exploding attention logits by constraining query and key weights directly; weight norm constraints are the subject of \cref{sec:comparing-methods}. A second example is for the design of differentially private training algorithms where there is a need to add carefully calibrated noise to the network weights \citep{bethune2024deep,bethune2024dpsgdclippinglipschitzneural}. And a third example is to aid in weight quantization \citep{pmlr-v119-elthakeb20a, Weng_Zhao_Liu_Chen_Lin_Daniel_2020}. 

In fact, there is a close theoretical link between input--output Lipschitzness and weight--output Lipschitzness in deep learning \citep{large2024, bethune2024deep}. The reason is that when we compose two subnetworks, Lipschitzness with respect to the weights of the first subnetwork depends upon the degree of input--output Lipschitzness of the second subnetwork.

Various techniques have been proposed for producing neural networks amenable to Lipschitz certification. These include techniques for modifying the network architecture and training process to improve the resulting Lipschitz properties. For example, spectral normalization \citep{miyato2018spectralnormalizationgenerativeadversarial,Gogianu2021SpectralNF} has been proposed as a means to control the Lipschitz properties of individual weight matrices. New nonlinearities \citep{anil2019sorting} and normalization layers \citep{qi2023lipsformer} have also been proposed. 

Furthermore, given a trained model of a given architecture, various techniques have been proposed for deriving Lipschitz certificates. Deriving the exact Lipschitz constant (i.e. the least upper bound) is known to be computationally hard \citep{katz,NEURIPS2018_d54e99a6,pmlr-v80-weng18a} so researchers settle for producing slacker upper bounds. One approach to producing upper bounds---used in this paper for simplicity---is to obtain Lipschitz statements for each component in the architecture and add or multiply them as appropriate to compose them \citep{szegedy2013intriguing}. However, tighter approaches have also been proposed: \citet{pmlr-v80-weng18a,pmlr-v97-weng19a} provide examples.

\section{Weight norm constraints to enforce Lipschitz constraints}
\label{sec:comparing-methods}

A function $f(x)$ has Lipschitz bound $K$ under a norm $\norm{\cdot}$ if it satisfies $\norm{f(x_1) - f(x_2)} \leq K \cdot \norm{x_1 - x_2}$ for all inputs $x_1, x_2$. The Lipschitz constant is the smallest Lipschitz bound. For neural networks, the most common operation is matrix multiplication which has $\ell_2$ Lipschitz constant equal to the spectral norm of the weight matrix. Constraining the spectral norm of weight matrices is not new, with past work primarily exploring weight decay, spectral normalization, and orthogonal constraints \citep{krogh1991simple,yoshida2017spectralnormregularizationimproving,miyato2018spectralnormalizationgenerativeadversarial,gouk2020regularisationneuralnetworksenforcing,kexuefm-spectral-weight-decay}. These methods have been tested with the AdamW optimizer and have shown benefits for generalization and adversarial robustness \citep{bartlett2017spectrallynormalizedmarginboundsneural,tsuzuku2018lipschitzmargintrainingscalablecertification}. The Muon optimizer introduces new possibilites by ensuring small, fixed-norm weight updates. Inspired by this property, we revisit existing methods and develop new methods for constraining weights. We ask the question:

\begin{center}
    \textit{What is the best way to enforce weight norm constraints throughout training?}
\end{center}

We compare seven methods based on how well they 1) maintain high performance, 2) enforce weight norm constraints, and 3) trade off performance with a Lipschitz bound. To summarize our conclusions, we find that the Muon optimizer achieves lower Lipschitz bounds and better performance compared to AdamW. Among the constraint methods, our experiments suggest that spectral soft cap, spectral hard cap, and spectral normalization meet these criteria best.

\textbf{Muon enables hard weight constraints.} Unlike in AdamW, the weight update norm in Muon is bounded by the learning rate---if its orthogonalizing polynomial never exceeds 1. We follow \citep{You2025} to ensure this property in our experiments. \citet{pethick2025trainingdeeplearningmodels} noted that bounded weight update spectral norm upgrades weight decay with parameter $\lambda$ to enforce a \textit{strict} spectral norm constraint of $1/\lambda$. The reason is that an equilibrium occurs between the update step and weight decay when the weight norm $w$ satisfies $w = w(1 - \lambda \eta) + \eta$ for learning rate $\eta > 0$. See \cref{app:spectral-softcap} for details. We hypothesize that this property may explain our evidence that Muon, compared to AdamW, improves the Lipschitz vs. performance tradeoff for standard methods such as weight decay.

\textbf{A spectral generalization of weight decay.} Weight decay can be viewed as a special case of an \textit{odd polynomial iteration} applied to the weights. Odd polynomials are special because they act directly on the singular values: $p(U \Sigma V^\top) = U p(\Sigma) V^\top$, where $U\Sigma V^\top$ is a singular value decomposition. The odd polynomial for weight decay is $p(x) = (1 - \eta \lambda) x$, where $\eta$ is the learning rate and $\lambda$ is the weight decay. \citet{cisse2017parseval} explored an orthogonalizing polynomial $p(x) = (1 + \beta)x - \beta x^3$, but \citet{miyato2018spectralnormalizationgenerativeadversarial} note that pressuring all singular values toward one limits information in the spectrum. Their method, \textit{spectral normalization}, enforces norm constraints while allowing singular values less than 1 \citep{gouk2020regularisationneuralnetworksenforcing}, but normalization accomplishes the constraint by scaling down the entire spectrum. This global effect motivates a more targeted approach
: penalizing only the singular values that are too large, leaving smaller ones untouched. For a desired maximum norm $\sigma_\text{max} \geq 0$, an idealized penalty would apply $\min(\sigma_\text{max}, \sigma)$ to the singular values, but exactly computing the SVD is slow. Odd polynomial iterations serve as a fast and effective approximation. We contribute a family of such approximations called \textit{spectral soft cap} that contains weight decay as a special case. The derivation and discussion is in \cref{app:spectral-softcap}.

\subsection{Methods for controlling weight norm}\label{sec:methods_weight_constraints}

We are interested in controlling the $\rms \to \rms$ operator norm---a rescaled spectral norm---which has emerged as natural for deep learning \citep{yang2024spectralconditionfeaturelearning,bernstein2024modulardualitydeeplearning}. Unit $\rms\to\rms$ norm is equivalent to a spectral norm of $\sqrt{d_\mathsf{out}/d_\mathsf{in}}$ for a weight matrix $W \in \R^{d_\mathsf{out} \times d_\mathsf{in}}$. In what follows, we denote the principal singular vector subspace with singular value $\sigma_1 \geq 0$ by $\sigma_1 u_1 v_1^\top$, computed via power iteration. We briefly review some known methods to constrain weight norm, then introduce two new methods called spectral capping and spectral hammer.

\textbf{Weight decay}, or Frobenius norm regularization, maps $W \mapsto (1 - \lambda \eta)W$ where $\lambda > 0$ is the decay parameter and $\eta > 0$ is the learning rate, guaranteeing a norm bound in conjunction with Muon.

\textbf{Spectral weight decay}, or spectral norm regularization, targets only the top singular value, mapping $W \mapsto W - \lambda \sigma_1 u_1 v_1^\top$ where $\lambda > 0$ is the decay parameter \citep{yoshida2017spectralnormregularizationimproving,kexuefm-spectral-weight-decay}.

\textbf{Spectral normalization}, originally introduced in GAN training \citep{miyato2018spectralnormalizationgenerativeadversarial}, guarantees a spectral norm bound by mapping $W \mapsto \tfrac{W}{\sigma_1}$.

\textbf{Stiefel manifold projection} pressures all singular values toward 1 using an odd polynomial iteration $W \mapsto p(W)$, leaving open the choice of polynomial $p$. We follow \citet{You2025} whose polynomial converges very rapidly rather than the polynomial from \citep{cisse2017parseval}. Although Stiefel manifold projections usually refers to projections for rectangular matrices, with a slight abuse of notation we use it to describe this operation on both square and rectangular matrices.

\begin{figure}[t]
  \centering
  \includegraphics[width=\linewidth]{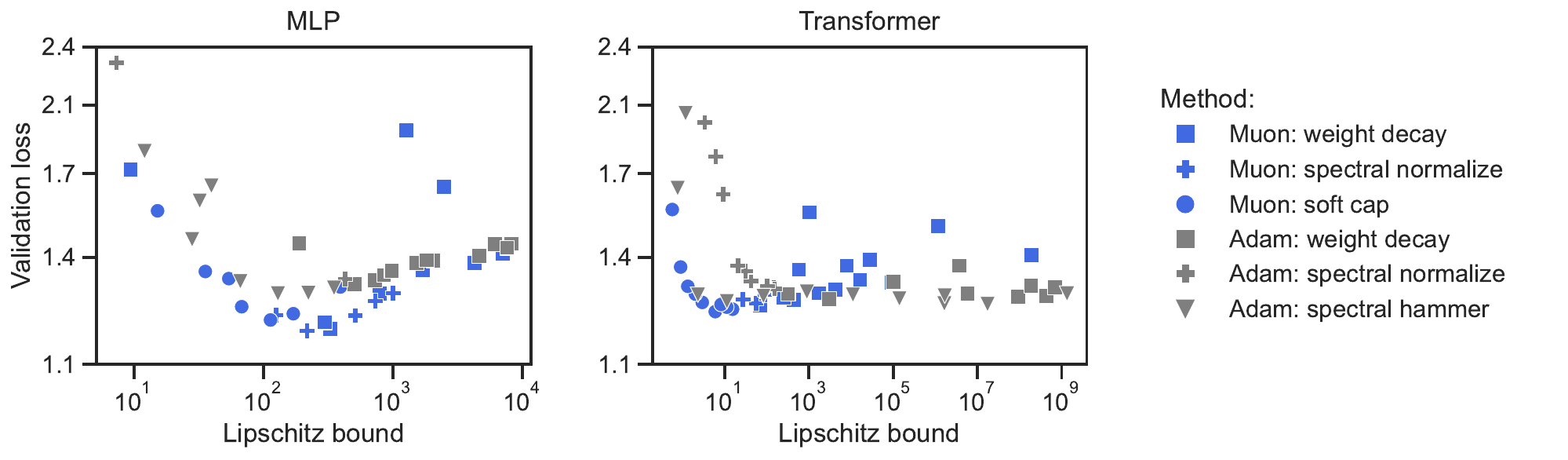}
  \caption{\textbf{Using Muon instead of AdamW improves the Lipschitz vs. performance tradeoff for standard weight regularization techniques.} We train 1600 MLPs on CIFAR-10 (left) and 800 transformers on Shakespeare text (right), varying the optimizer and weight constraint method. Weight decay and spectral normalization reach better loss with lower Lipschitz bound when using Muon. Two weight constraint methods that we design---spectral soft cap and spectral hammer---are also promising. See \cref{app:experimental-details} for experimental details.}
  \label{fig:pareto}
\end{figure}

We extend these ideas with two new methods:

\textbf{Spectral hammer} is similar to spectral weight decay, but sets the top singular value to $\sigma_\text{max}$ by mapping $W \mapsto W + (\sigma_\text{max} - \sigma_1)u_1v_1^\top$---so to speak, a hammer that strikes the nail that sticks out the most. Spectral hammer does not guarantee the spectral norm stays below $\sigma_\text{max} > 0$ because multiple singular vectors may increase per update. Spectral hammer is better suited to low stable rank weight updates as are common in Adam \citep{zhao2024adapproxadaptiveapproximationadam}. Muon's update is always high stable rank.

\textbf{Spectral capping} is co-designed for Muon's high stable rank update, smoothly approximating the map $\sigma \mapsto \min(\sigma_\text{max}, \sigma)$ for all singular values in parallel. Rather than rely on costly SVDs, it uses an odd polynomial approximation. The primary variant we experiment with is called \textit{spectral soft cap} because it applies a loose approximation $p_2(p_1(x))$, where $p_1(x) = x - \alpha x^3$ and $p_2(x) = x + \alpha x^3$ with strength parameter $\alpha \geq 0$, as discussed in \cref{app:spectral-softcap}. To incorporate weight decay as a special case, we may first apply $p_0(x) = (1 - \lambda \eta) x$. This composition is designed to decay a singular value very little when $\sigma \ll \sigma_\text{max}$, but when $\sigma = \sigma_\text{max}$ to decay it as strongly as necessary to counteract Muon's known update norm, strictly enforcing a weight norm bound. When the learning rate is scheduled, finding the minimal $\alpha \geq 0$ that bounds the weight norm avoids accumulating error.

\begin{theorem}[Spectral soft cap bounds spectral norm]
    \label{thm:spectral-soft-cap-bound}
    Given a desired maximum spectral norm $\sigma_\text{max} \geq 0$, learning rate $\eta \geq 0$, weight decay $\lambda \geq 0$, and weight matrix with bounded norm $\norm{W}_\ast \leq \sigma_\text{max}$, there is a minimal $\alpha \geq 0$ such that Muon's update step followed by spectral soft cap preserves $\norm{W}_\ast \leq \sigma_\text{max}$. Calculating $\alpha$ involves solving the roots of a quartic polynomial.
\end{theorem}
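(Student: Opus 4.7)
The plan is to reduce the matrix statement to a one-variable problem on singular values, exploit Muon's update-norm bound to localize the range to be controlled, and extract $\alpha$ as the smallest nonnegative root of an explicit quartic.

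First I would reduce to one dimension. Each of $p_0(x) = (1-\lambda\eta)x$, $p_1(x) = x - \alpha x^3$, and $p_2(x) = x + \alpha x^3$ is an odd polynomial, so the composition $q := p_2 \circ p_1 \circ p_0$ acts diagonally on an SVD $W = U\Sigma V^\top$ via $q(W) = U\,\mathrm{diag}(q(\sigma_i))\,V^\top$, and thus $\norm{q(W)}_\ast = \max_i |q(\sigma_i)|$. Because Muon's update has operator norm at most $\eta$ (its orthogonalizing polynomial is bounded by $1$), every post-update singular value lies in $[0,\sigma_\text{max}+\eta]$, and it suffices to find the minimal $\alpha \geq 0$ with $|q(\sigma)| \leq \sigma_\text{max}$ on this interval. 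Writing $T := (1-\lambda\eta)\sigma$, the opposite cubic coefficients of $p_1$ and $p_2$ cancel the linear-in-$\alpha$ contribution in $p_2(p_1(T))$, leaving
\[
    q(\sigma) = T - 3\alpha^2 T^5 + 3\alpha^3 T^7 - \alpha^4 T^9 \;=\; T\,h(\alpha T^2), \qquad h(y) := 1 - 3y^2 + 3y^3 - y^4.
\]
I would then observe that the quadratic factor of $\partial_\alpha q$, namely $-4(\alpha T^2)^2 + 9(\alpha T^2) - 6$, has discriminant $81 - 96 < 0$ and negative leading coefficient, so $\partial_\alpha q < 0$ strictly for $\sigma > 0$; hence the minimal admissible $\alpha$ is determined by saturating the binding constraint.

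Next I would extract the quartic. A parallel computation for $\partial_\sigma q$ gives the factorization $9y^4 - 21y^3 + 15y^2 - 1 = (3y-1)(3y^3 - 6y^2 + 3y + 1)$ with $y = \alpha(1-\lambda\eta)^2 \sigma^2$; the cubic factor is positive on $[0,\infty)$ (its local minimum at $y = 1$ equals $1$), so $q$ is increasing in $\sigma$ exactly when $y < 1/3$. In the regime where this holds across the interval, the worst case is the right endpoint $\sigma = \sigma_\text{max}+\eta$, and imposing $q(\sigma_\text{max}+\eta) = \sigma_\text{max}$ with $T_* := (1-\lambda\eta)(\sigma_\text{max}+\eta)$ yields the quartic
\[
    \alpha^4 T_*^9 - 3\alpha^3 T_*^7 + 3\alpha^2 T_*^5 - (T_* - \sigma_\text{max}) = 0.
\]
Substituting $y = \alpha T_*^2$ reduces this to $y^2(y^2 - 3y + 3) = (T_* - \sigma_\text{max})/T_*^5$; the inner quadratic is strictly positive (discriminant $9 - 12 < 0$) and the left-hand side strictly increases from $0$ to $\infty$, so a unique nonnegative root $y^*$ exists whenever $T_* \geq \sigma_\text{max}$, giving minimal $\alpha = y^*/T_*^2$. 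When $T_* < \sigma_\text{max}$, weight decay alone already suffices and $\alpha = 0$ is minimal.

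The hard part will be closing the monotonicity loop: verifying that the $\alpha$ delivered by the quartic satisfies $\alpha(1-\lambda\eta)^2(\sigma_\text{max}+\eta)^2 \leq 1/3$, so that $q$ is truly monotone on $[0,\sigma_\text{max}+\eta]$ and the endpoint maximizes $|q|$. Outside this regime (very aggressive caps) $q$ has an interior turning point inside the interval, and I would additionally bound $|q|$ at the critical value $|q(\sigma_*)| = 62/(81\sqrt{3\alpha})$ (using $h(1/3) = 62/81$) and take the stricter of the two candidate $\alpha$s. For the operating regime targeted by this paper, the endpoint case binds and the displayed quartic is the operative equation.
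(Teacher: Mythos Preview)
Your approach is essentially the paper's: reduce to singular values via odd polynomials, use Muon's bound $\|\Delta W\|_\ast\leq\eta$ to localize the post-update range, and equate the soft-cap output at the worst endpoint to $\sigma_{\text{max}}$, yielding the same quartic in $\alpha$. Two small differences are worth noting. First, you fold weight decay into $q$ \emph{after} the update, so your endpoint constant is $T_*=(1-\lambda\eta)(\sigma_{\text{max}}+\eta)$, whereas the paper applies weight decay \emph{before} the update and obtains $k=\sigma_{\text{max}}(1-\lambda\eta)+\eta$; the two quartics are identical up to the substitution $T_*\leftrightarrow k$, a discrepancy of order $\lambda\eta^2$. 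Second, the paper simply asserts that it suffices to check the endpoint (``larger $x$ will decrease if a smaller $x$ decreases''), while you actually verify monotonicity of $q$ in both $\alpha$ and $\sigma$, factor $\partial_\sigma q$ to locate the regime $\alpha T^2<1/3$, and sketch the interior-critical-point case---so your argument is more complete than the paper's own on this point.
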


\cref{app:spectral-softcap} gives the proof. A second variant of spectral capping is \textit{spectral hard cap}, which uses the well-known matrix sign function to approximate the matrix function $\sigma \to \min(\sigma_\text{max}, \sigma)$ acting on the singular values, as discussed in \cref{app:spectral-hardcap}. Because this approximation is fixed, errors can compound later in training when the learning rate is scheduled to $0$.

Together, these methods cover a range of trade-offs between strict norm enforcement, preserving the spectrum, and computational efficiency. There may be other, better approaches, and we think exploring alternatives is an exciting direction. Note that spectral soft cap and spectral hard cap are designed to be compatible with Muon and therefore were not applied to tests with AdamW.

\subsection{AdamW and Muon: comparing weight constraint methods}

In \cref{fig:pareto}, we run a sweep to map the tradeoff frontier between validation loss and Lipschitz bound across our methods. The Lipschitz bounds are calculated with respect to the $\rms\to\rms$ operator norm (\cref{sec:methods_weight_constraints}). For a $\mathsf{ReLU}$ MLP, the Lipschitz bound is the product of the $\rms\to\rms$ norms of its weights. For a transformer, the Lipschitz bound is calculated as described in \cref{subsec:lipschitz-constant-of-transformer}. Muon consistently achieves both lower validation loss \textit{and} lower Lipschitz bounds than AdamW, a trend that holds for MLPs on CIFAR-10 and transformers on Shakespeare text. This result motivates our choice to adopt Muon for larger-scale experiments. Spectral normalization and spectral soft cap appear to make the most efficient use of a Lipschitz budget. Spectral hammer--which is designed for AdamW's low stable rank weight updates--shows competitive performance but does not enforce a Lipschitz bound, limiting its reliability for settings where constraint enforcement is critical. The sweep includes 2400 training runs, where for each method we display only the best validation loss per bin of Lipschitz bound.

\begin{figure}[t]
    \centering
    \includegraphics[width=1.0\linewidth]{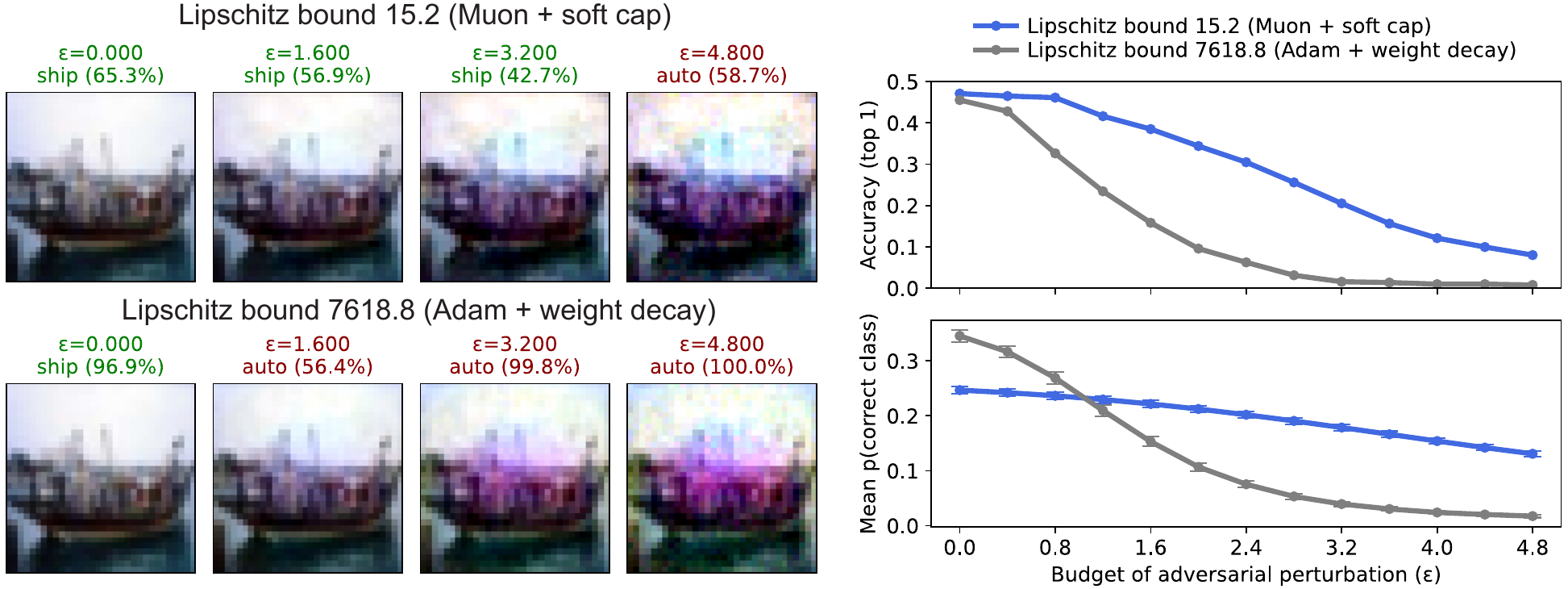}
    \caption{\textbf{Networks trained with Muon and spectral soft cap have lower Lipschitz bounds and are more adversarially robust.} 
    Lower Lipschitz bounds have been linked to greater adversarial robustness %
    \citep{cisse2017parseval,huang2021trainingcertifiablyrobustneural}. To assess this effect in our models, we train a CIFAR-10 MLP with a Lipschitz bound of 15.2 (Muon + spectral soft cap), which matches the 45\% clean accuracy of a baseline model (AdamW + weight decay) that has a a higher Lipschitz bound of 7618.8. 
    Left: Example adversarial attacks with different $\ell_2$ budget $\epsilon \geq 0$ for the perturbation. 
    Top right: We quantify adversarial robustness across 2000 test images by the top-1 accuracy as a function of $\epsilon$. 
    The Lipschitz-constrained network trained with Muon and spectral soft cap maintains a higher accuracy for larger values of $\epsilon$. 
    Bottom right: the mean probability of the correct class in the Lipschitz-constrained network starts lower than that of the baseline model, but degrades slowly under increasing $\epsilon$. By contrast, the baseline model peaks higher for $\epsilon=0$, but drops off sharply.
    }
    \label{fig:accuracy_vs_epsilon}
\end{figure}

\subsection{Adversarial robustness of Lipschitz networks}

Prior work \citep{cisse2017parseval,huang2021trainingcertifiablyrobustneural} suggests that a neural network's adversarial robustness is related to its Lipschitz constant, making Lipschitz control a potential path for developing models with high certified accuracy. We confirm this relationship holds for MLPs trained with Muon and spectral soft cap. \cref{fig:accuracy_vs_epsilon} compares the accuracy of two trained MLPs under various budgets of adversarial perturbation $\epsilon$. The pair depicted has matching baseline validation accuracy $\approx$ 45\% but different Lipschitz bounds: $15.2$ (Muon + spectral soft cap) vs. $7618.8$ (AdamW + weight decay).

While both models achieve similar accuracy without perturbation, the Lipschitz-constrained MLP (quantified in the RMS $\rightarrow$ RMS operator norm, \cref{sec:methods_weight_constraints}) trained with Muon exhibits a smoother dropoff in accuracy and confidence across the $2000$ CIFAR-10 test set images as the adversarial attack increases its $l_2$ perturbation. Larger values of $\epsilon$ are required to fool the Lipschitz-constrained network (Figure \ref{fig:accuracy_vs_epsilon}, Left).

\begin{figure}[t]
    \centering
    \includegraphics[width=1.0\linewidth]{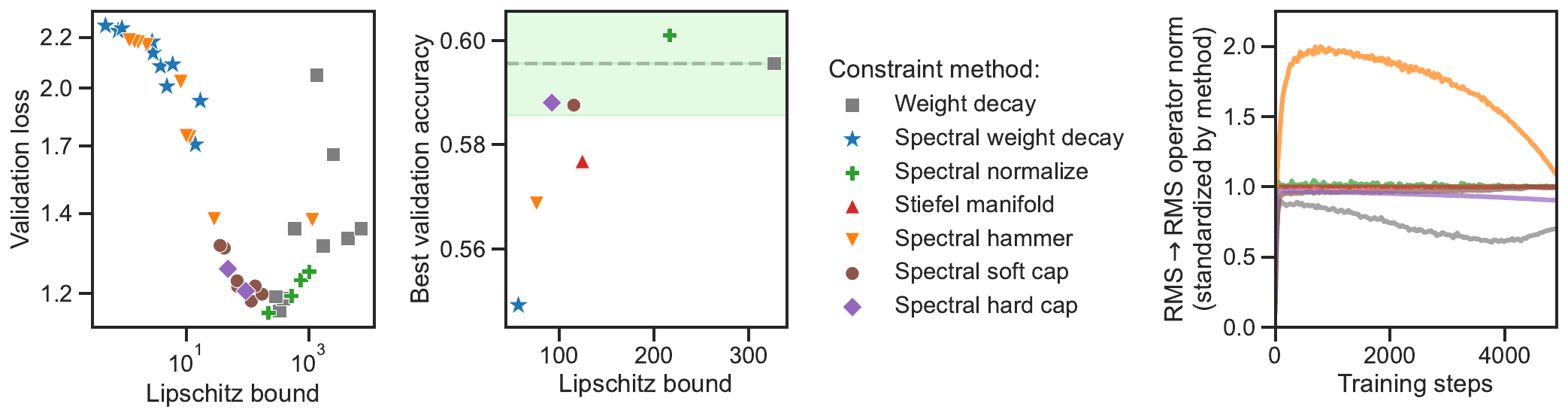}
    \caption{\textbf{Left: Weight constraint methods designed for Muon lie on the frontier of the Lipschitz vs. loss tradeoff.} Each point shows the lowest validation loss achieved at a given Lipschitz bound across all MLP runs on CIFAR-10. In the low loss regime, staying on the tradeoff frontier requires either spectral normalization or spectrally capping the singular values. \textbf{Middle: Spectral normalization and spectral capping match baseline with lower Lipschitz bound on CIFAR-10.} Each method comes within 1\% accuracy (shaded green region) and has lower Lipschitz bound. \textbf{Right: RMS $\boldsymbol{\to}$ RMS operator norm of hidden layers over training for the best networks.} The norm is standardized so that the weight constraints all target 1. Spectral normalization and Stiefel manifold projection strictly meet this target. Weight decay, spectral soft cap, and spectral hard cap stay below the target, while spectral hammer fails to remain constrained.}
    \label{fig:fullmuoncomp}
\end{figure}

\subsection{Comparing weight constraint methods within Muon} 

In \cref{fig:fullmuoncomp}, we use Muon alongside all seven weight constraint methods to identify which approaches best meet three goals: 1) enabling us to define a Lipschitz bound before training, 2) enforcing that bound throughout training, and 3) matching or exceeding the performance of standard weight decay. We tested each weight constraint method on a 3-layer MLP with hidden dimension 256, trained with Muon on CIFAR-10 classification. Full experimental details are in \cref{app:experimental-details}.

In \cref{fig:fullmuoncomp} (left panel), we see that spectral normalization, spectral soft cap, and spectral hard cap define the frontier of the Lipschitz vs. validation loss tradeoff. In \cref{fig:fullmuoncomp} (middle), we select the parameter settings with the highest validation accuracy for further analysis. Spectral normalization, spectral soft cap, and spectral hard cap are the only methods to reach within 1\% validation accuracy of the baseline (Muon with weight decay). Other methods reached within 4\% accuracy.

\cref{fig:fullmuoncomp} (right panel) visualizes how the norm of the hidden weight matrix evolves during training. All methods but spectral hammer retain the weight norm at or under its target. In contrast, spectral hammer exceeds its target but begins to converge near the end, indicating that it could be a promising technique under Muon, too, on longer training runs that also schedule learning rate to 0. However, within our 50-epoch window, it fails to reliably control the Lipschitz bound. Spectral weight decay does not enforce predefined Lipschitz bounds, so it is not plotted.

We select spectral soft cap, spectral hard cap, and spectral normalization for our transformer training experiments. While all weight constraint methods could potentially benefit from combining with standard weight decay, we want to isolate the effect of each one.

\section{Transformers with enforced weight constraints}
\label{sec:enforcing-for-transformers}

To develop a toolkit for training Lipschitz-constrained transformers, we begin with a discussion of how to handle residual connections and self-attention (\cref{subsec:architecture}). In \cref{subsec:lipschitz-constant-of-transformer}, we explain how to calculate a Lipschitz bound for a transformer. In \cref{subsec:shakespeare}, we find that spectral normalization and spectral soft cap perform well on 2M parameter transformers trained on Shakespeare text. In \cref{subsec:nanogpt}, we scale up to a transformer with 145M parameters trained on FineWeb10B internet text \citep{penedo2024finewebdatasetsdecantingweb}. To combat the undertuned baseline problem, we begin from the competitive benchmark of NanoGPT speedrunning \citep{modded_nanogpt_2024}.

\subsection{Breaking the multiplication barrier?}
\label{subsec:architecture}

A major triumph of \citet{large2024} is to create transformers with a depth-independent Lipschitz bounds. However, their approach relies on activations having unit $\rms$ norm. Because we will later relax this constraint, our transformers will typically \textit{not} have a depth-independent Lipschitz bound. Nonetheless, we use their two architectural suggestions.

\textbf{Reparameterizing residual connections.} The classic residual connection from \citet{he2015deepresiduallearningimage} defines the update $x + \mathsf{block}(x)$, but even a 1-Lipschitz block can exponentially increase the Lipschitz bound: $x + \mathsf{identity}(x)$ doubles at every layer. \citet{large2024} use a convex combination \begin{equation}
    \label{eq:breaking-barrier-convex-combination}
    \frac{N-1}{N} \cdot x + \frac{1}{N} \cdot\mathsf{block}(x)
\end{equation}to break this multiplication barrier,  where $N$ is the number of layers. The residual connection will be $1$-Lipschitz if the block is $1$-Lipschitz. See Proposition 4 of \citet{large2024}. Our experiments use this convex parameterization. However, the bound breaks down if activation norms exceed 1. We are unable to attain high performance without relaxing the 1-Lipschitz constraint. Therefore we do not fully break the multiplication barrier: deeper networks can accrue astronomical Lipschitz bounds.

\textbf{Attention with $\boldsymbol{1/d}$ scaling.} The original multihead attention of \citet{vaswani2017attention} has no global Lipschitz bound \citep{kim2021lipschitzconstantselfattention}. In a footnote, \citet{vaswani2017attention} indicate that they chose $1/\sqrt{d}$ scaling because two random vectors $u, v$ with mean $0$ and variance $1$ will have a dot product $u \cdot v$ with mean $0$ and variance the dimension $d$. But key and query vectors may align more than at random. Perfect alignment would suggest $1/d$ scaling. \citet{large2024} prove that $1/d$ scaling in the softmax makes functional attention, \begin{equation}
    \label{eq:breaking-barrier-softmax-1-d}
    \softmax\left(\frac{QK^\top}{d}\right)V,
\end{equation} 1-Lipschitz if the input norms are 1. The Lipschitz bound is with respect to the $\norm{\cdot}_{\infty\rms}$ norm, the max $\rms$ norm of any token. We then multiply by $\tfrac13$ to make attention composed with the 3-sensitive input tuple $(q, k, v)$ unit sensitivity. See Proposition 7 of \citet{large2024}. Finally, while \citet{large2024} retains layer normalization, we remove it so that every operation is Lipschitz continuous.

\subsection{Calculating the Lipschitz bound of a transformer}
\label{subsec:lipschitz-constant-of-transformer}

To test whether transformers with small Lipschitz bound can perform well, we would like to know what weight norm to enforce to end up with a desired Lipschitz bound. This section sketches an algorithm for bounding the Lipschitz constant of a transformer given its weight norms; the full algorithm is in \cref{app:lipschitz-attention}. Our bound tightens Theorem 2 from LipsFormer \citep{qi2023lipsformer} by accounting for each weight norm individually, rather than working with the max weight norm across residual blocks. \citet{fazlyab2023efficientaccurateestimationlipschitz} suggest ways to tighten a global bound like ours, which we leave to future work. To bound self-attention, we extend Proposition 7 from the modular norm paper \citep{large2024} to when the input is no longer unit norm, an essential concession to reach different regions of the Lipschitz vs. performance tradeoff.

Our Lipschitz bounds are with respect to the max $\rms$ norm over token positions, denoted $\norm{\cdot}_{\infty\rms}$.

\textbf{Step 1: Bound activation norms.} Our Lipschitz bound for rescaled dot-product attention relies on the activation norms remaining bounded. Therefore, the algorithm begins by computing a per-layer bound for the maximum $\norm{\cdot}_{\infty\rms}$ norm of activations. This bound comes from combining residual connections with per-block maximum activation increases, found for an MLP by multiplying its two weight norms and for attention by multiplying its $W_V$ and $W_O$ weight norms. Our MLPs can slightly decay activation norm because we scale $\mathsf{GeLU}$ down by its maximum derivative, $\mathsf{GeLU}/1.1289$.

\textbf{Step 2: Bound the Lipschitz constant.} Suppose the Lipschitz constant prior to reaching a particular layer is $L$. The Lipschitz constant after a residual connection is at most $(1 - \alpha) L + \alpha \cdot L \cdot L_\mathsf{block}$. To compute a block's Lipschitz bound $L_\mathsf{block}$, for MLPs we calculate $\norm{W_\mathsf{in}}_{\rms\to\rms} \cdot \norm{W_\mathsf{out}}_{\rms\to\rms} / 1.1289$ due our rescaled $\mathsf{GeLU}$, and for attention we use \cref{thm:function-attention-is-lipschitz}.

\subsection{Shakespeare Transformer}
\label{subsec:shakespeare}

Before scaling to NanoGPT, we explore the Lipschitz vs. performance tradeoff for transformers at a smaller scale. To narrow our aim, we want a transformer that has small Lipschitz bound \textit{at every layer}. \citet{bethune2022pay} comments that any $L$-Lipschitz classifier can be made $1$-Lipschitz by dividing the logits by $L$, but we do not downscale our final logits, although scaling temperature during training could have beneficial effects \citep{agarwala2020temperaturechecktheorypractice}. All experimental details are available in \cref{app:experimental-details}.

\textbf{We can train a competitive \textless 2-Lipschitz Shakespeare transformer}. Our \textless 2-Lipschitz transformer reaches validation loss $1.29 < 1.47$ from the baseline in \citet{karpathy_nanoGPT}, although the baseline may not be tuned carefully. Our model has dimension 256, depth 3, and was trained for 2000 steps with Muon; the baseline has dimension 384, depth 6, and was trained for 5000 steps with AdamW. Our transformer does not use layer normalization. To achieve this performance requires relaxing the maximum weight norm $\sigma_\text{max}$ to around 2. The best validation loss from our sweep was 1.20 with a \textless 6.02-Lipschitz transformer. While gains may be attributed to hyperparameters or the choice of optimizer, we nonetheless end up with one construction of our aim: a performant transformer with a small, enforced Lipschitz bound.

\begin{table}[tbp]
    \centering
    \resizebox{\textwidth}{!}{
    \renewcommand{\arraystretch}{1.1}
    \begin{tabular}{lcccccc}
        \toprule
        \makecell[l]{\textbf{Transformer}\\\textbf{Architecture}} &
        \makecell[c]{\textbf{Lipschitz}\\\textbf{Bound}} &
        \makecell[c]{\textbf{Tokens}\\\textbf{Used}} &
        \makecell[c]{\textbf{Weight}\\\textbf{Constraint}} &
        \makecell[c]{\textbf{Validation}\\\textbf{Accuracy} ($\uparrow$)} &
        \makecell[c]{\textbf{Validation}\\\textbf{Loss} ($\downarrow$)} &
        \makecell[c]{\textbf{Activation}\\\textbf{Max Entry}} \\ %
        \midrule
        Baseline (Modula) & $\infty$ & 0.7B & none & \textbf{0.374} & \textbf{3.491} & 1464 \\
        LipsFormer & $10^{130}$ & 0.7B & none & 0.301 & 4.130 & 61 \\ %
        Ours ($\sigma_\text{max}=1$) & $10$ & 0.7B & spectral normalize & 0.212 & 5.047 & 14 \\
        Ours ($\sigma_\text{max}=8$) & $10^{110}$ & 0.7B & spectral soft cap & 0.365 & 3.567 & 292 \\
        Ours ($\sigma_\text{max}=8$) & $10^{105}$ & 0.7B & spectral hard cap & 0.348 & 3.810 & 71 \\
        Ours ($\sigma_\text{max}=8$) & $10^{134}$ & 0.7B & spectral normalize & 0.362 & 3.582 & 103 \\
        Ours ($\sigma_\text{max}=\infty$) & $\infty$ & 0.7B & none & nan & nan & nan \\
        \midrule
        Baseline (Speedrun) & $\infty$ & 0.7B & none & 0.394 & \textbf{3.280} & 148{,}480 \\
        Baseline (Karpathy) & $\infty$ & 8.9B & none & - & \textbf{3.280} & - \\
        Ours ($\sigma_\text{max}=16$) & $10^{264}$ & 2.8B & spectral normalize & \textbf{0.395} & \textbf{3.280} & 160 \\
        \bottomrule
    \end{tabular}
    }
    \vspace{0.6em}
    \caption{\textbf{Transformers with enforced Lipschitz constraints can match performance on NanoGPT.} The NanoGPT speedrun is a competitively tuned benchmark building on Karpathy's original replication of GPT-2 \citep{karpathy_nanoGPT,modded_nanogpt_2024}. With the speedrun baseline as a starting point, we substitute Lipschitz transformer components and constrain weight norms to not exceed a given $\sigma_\text{max} \geq 0$. Unlike LipsFormer \citep{qi2023lipsformer}, our weight constraints enforce a Lipschitz bound chosen prior to training. To demonstrate, we train a 10-Lipschitz transformer to $21.2\%$ accuracy; the model uses no layer norm, QK norm, or logit tanh, yet trains stably. The same model with no weight constraints diverges. However, matching the baseline accuracy increases the Lipschitz bound to $10^{264}$, computed as in \cref{subsec:lipschitz-constant-of-transformer}. Our Lipschitz bounds may be loose, as suggested by the small maximum activation that we observe across a batch of 393K tokens. Final loss variance is $0.0008$.}
    \label{tab:nanogptspeedrun}
\end{table}

\subsection{Scaling to NanoGPT}
\label{subsec:nanogpt}

We validate our method by training a transformer with 145M parameters on the NanoGPT speedrun benchmark \citep{modded_nanogpt_2024}, which is built on top of \citep{karpathy_nanoGPT}'s reproduction of GPT-2. The baseline is competitively optimized to reach validation loss 3.28 in the shortest wallclock time. The latest record as of February 1, 2025 requires only 0.7B tokens, or 3 minutes of training on an 8xH100, and achieves a validation accuracy of 39.4\%. We implement our methods on top of the speedrun while keeping all other training methods fixed. We consider two baselines: 1) the original speedrun, and 2) the speedrun already modified to incorporate residual reparameterization and $\tfrac{1}{d}$ attention scaling from \cref{subsec:architecture}, but with stability measures like layer norm removed. We find this second baseline diverges during training, indicating weight constraints are necessary to proceed without traditional stability techniques. We report the validation loss and the validation accuracy as primary comparison metrics.

To implement our method, we first remove speedrun-specific optimizations such as skip connections and learnable scale parameters. We remove the layernorms used for pre-normalization, the logit tanh softcap, and the QK norms in the attention layers. We also replace the $\mathsf{ReLU}^2$ activations with $\mathsf{GeLU}/1.1289$---making the activation function Lipschitz continuous. We reparametrize the residual connections and attention layer as in \cref{eq:breaking-barrier-convex-combination,eq:breaking-barrier-softmax-1-d}. At initialization, we project the linear weights to be semi-orthogonal and normalize the embeddings to have $\rms$ norm 1. Finally, we explicitly extend beyond the closest related work, LipsFormer \citep{qi2023lipsformer}, by enforcing weight norm constraints throughout training: we cap the $\rms$ norm of embeddings to 1 and apply one of the weight constraint methods from \cref{sec:methods_weight_constraints} to all other weights after every training step. Matrix multiplications in the linear layers in the MLPs and attention are done in fp8 precision.

\cref{tab:nanogptspeedrun} summarizes our 145M parameter scale transformer results. In contrast to LipsFormer, our method guarantees a Lipschitz upper bound specified prior to training. An important lever is the maximum $\rms\to\rms$ norm $\sigma_\text{max} \geq 0$ we allow for the linear layers. Smaller $\sigma_\text{max}$ correspond to smaller Lipschitz bounds, but may lower performance and vice versa. Two other variables that affect the Lipschitz bound are the attention logit scale and the final logit scale. Using spectral normalization with $\sigma_\text{max}=1$ and a final logit scale of 8 results in a 10-Lipschitz transformer that achieves validation loss 5.047 and accuracy 21.2\%. No activation in this model exceeds $\rms$ norm 1, which could enhance stability during training. Setting $\sigma_\text{max}=16$ enables parity with speedrun performance, but increases the Lipschitz bound to $10^{264}$.
This setting trains with $4\times$ as many tokens as the current speedrun record, but $3.2\times$ fewer than Karpathy's baseline.

\section{Discussion}
\label{sec:discussion}

Despite high Lipschitz upper bounds, our transformers on NanoGPT exhibit low maximum activation entries (50-110) compared to the baseline (148K). Perhaps as a result, our models train stably without standard measures including layer norm, QK norm, or tanh logit softcapping. In future work, we are interested to test whether these low maximum activations hold potential for low-precision training and inference. We also wonder whether training would remain stable at larger scales.

For MLPs and small transformers, we find that using Muon improves the Lipschitz vs. performance tradeoff. Out of weight constraint methods we test, \textit{spectral normalization}, \textit{spectral soft cap}, and \textit{spectral hard cap} compare favorably to standard weight decay. Perhaps surprisingly, on both CIFAR-10 and Shakespeare data, we achieve our best loss with Lipschitz-enforced models, potentially representing a training speed benefit.

Our work has several limitations. We did not find a principled way to select weight norm, final logit scale, and attention logit scale hyperparameters, instead relying on sweeps. Our Lipschitz bound also increases rapidly as depth increases, unless we constrain weights to unit norm. A different architecture, or insight beyond a global Lipschitz bound, could make progress on this problem.

In conclusion, this paper develops a method for training transformers with an enforced Lipschitz bound throughout training, extending earlier efforts focused on different architectures or only constraining at initialization. Lipschitz-certified transformers may be of particular interest for domains such as privacy, control, adversarial robustness, low-precision training, and loss-spike-free large scale pretraining. Although training speed benefits fade in our NanoGPT speedrun experiments, we wonder whether at this scale Lipschitz-enforced training can be made faster than standard training.

\section*{Acknowledgements}

Thank you to Lambda Labs, Rami Seid, and Lucid Simulations for compute support. This work was supported by a Packard Fellowship to P.I. and by ONR MURI grant N00014-22-1-2740.

\bibliographystyle{plainnat}
\bibliography{refs}

\begin{thebibliography}{57}
\providecommand{\natexlab}[1]{#1}
\providecommand{\url}[1]{\texttt{#1}}
\expandafter\ifx\csname urlstyle\endcsname\relax
  \providecommand{\doi}[1]{doi: #1}\else
  \providecommand{\doi}{doi: \begingroup \urlstyle{rm}\Url}\fi

\bibitem[Agarwala et~al.(2023)Agarwala, Schoenholz, Pennington, and Dauphin]{agarwala2020temperaturechecktheorypractice}
Atish Agarwala, Samuel~Stern Schoenholz, Jeffrey Pennington, and Yann Dauphin.
\newblock Temperature check: Theory and practice for training models with softmax-cross-entropy losses.
\newblock \emph{Transactions on Machine Learning Research}, 2023.

\bibitem[Anil et~al.(2019)Anil, Lucas, and Grosse]{anil2019sorting}
Cem Anil, James Lucas, and Roger~B. Grosse.
\newblock Sorting out {L}ipschitz function approximation.
\newblock In \emph{International Conference on Machine Learning}, 2019.

\bibitem[Arjovsky et~al.(2016)Arjovsky, Shah, and Bengio]{arjovsky2016unitaryevolutionrecurrentneural}
Martin Arjovsky, Amar Shah, and Yoshua Bengio.
\newblock Unitary evolution recurrent neural networks.
\newblock In \emph{International Conference on Machine Learning}, 2016.

\bibitem[Ba et~al.(2016)Ba, Kiros, and Hinton]{ba2016layernormalization}
Jimmy~Lei Ba, Jamie~Ryan Kiros, and Geoffrey~E. Hinton.
\newblock Layer normalization.
\newblock \emph{arXiv:1607.06450}, 2016.

\bibitem[Bartlett et~al.(2017)Bartlett, Foster, and Telgarsky]{bartlett2017spectrallynormalizedmarginboundsneural}
Peter Bartlett, Dylan~J. Foster, and Matus Telgarsky.
\newblock Spectrally-normalized margin bounds for neural networks.
\newblock In \emph{Neural Information Processing Systems}, 2017.

\bibitem[Bernstein(2025)]{modula-docs}
Jeremy Bernstein.
\newblock The {M}odula docs, 2025.
\newblock URL \url{https://docs.modula.systems/}.
\newblock MIT License.

\bibitem[Bernstein and Newhouse(2025)]{bernstein2024modulardualitydeeplearning}
Jeremy Bernstein and Laker Newhouse.
\newblock Modular duality in deep learning.
\newblock In \emph{International Conference on Machine Learning}, 2025.

\bibitem[B{\'e}thune(2024)]{bethune2024deep}
Louis B{\'e}thune.
\newblock \emph{Deep Learning with {Lipschitz} Constraints}.
\newblock PhD thesis, Universit{\'e} de Toulouse, 2024.

\bibitem[B{\'e}thune et~al.(2022)B{\'e}thune, Boissin, Serrurier, Mamalet, Friedrich, and Sanz]{bethune2022pay}
Louis B{\'e}thune, Thibaut Boissin, Mathieu Serrurier, Franck Mamalet, Corentin Friedrich, and Alberto~Gonzalez Sanz.
\newblock Pay attention to your loss: Understanding misconceptions about {L}ipschitz neural networks.
\newblock In \emph{Neural Information Processing Systems}, 2022.

\bibitem[B{\'e}thune et~al.(2024)B{\'e}thune, Massena, Boissin, Prudent, Friedrich, Mamalet, Bellet, Serrurier, and Vigouroux]{bethune2024dpsgdclippinglipschitzneural}
Louis B{\'e}thune, Thomas Massena, Thibaut Boissin, Yannick Prudent, Corentin Friedrich, Franck Mamalet, Aurelien Bellet, Mathieu Serrurier, and David Vigouroux.
\newblock {DP-SGD} without clipping: The {L}ipschitz neural network way.
\newblock In \emph{International Conference on Learning Representations}, 2024.

\bibitem[Bradbury et~al.(2018)Bradbury, Frostig, Hawkins, Johnson, Leary, Maclaurin, Necula, Paszke, Vander{P}las, Wanderman-{M}ilne, and Zhang]{jax2018github}
James Bradbury, Roy Frostig, Peter Hawkins, Matthew~James Johnson, Chris Leary, Dougal Maclaurin, George Necula, Adam Paszke, Jake Vander{P}las, Skye Wanderman-{M}ilne, and Qiao Zhang.
\newblock {JAX}: Composable transformations of {P}ython+{N}um{P}y programs, 2018.
\newblock URL \url{http://github.com/jax-ml/jax}.

\bibitem[Brown-Cohen et~al.(2024)Brown-Cohen, Irving, and Piliouras]{brown-cohen2024scalabale}
Jonah Brown-Cohen, Geoffrey Irving, and Georgios Piliouras.
\newblock Scalable {AI} safety via doubly-efficient debate.
\newblock In \emph{International Conference on Machine Learning}, 2024.

\bibitem[Cesista(2025)]{cesista2025spectralclipping}
Franz~Louis Cesista.
\newblock Fast, numerically stable, and auto-differentiable {Spectral Clipping} via {Newton-Schulz} iteration, June 2025.
\newblock URL \url{http://leloykun.github.io/ponder/spectral-clipping/}.

\bibitem[Cesista et~al.(2025)Cesista, You, and Jordan]{cesista2025muonoptcoeffs}
Franz~Louis Cesista, Jiacheng You, and Keller Jordan.
\newblock Squeezing 1--2\% efficiency gains out of {M}uon by optimizing the {N}ewton-{S}chulz coefficients, 2025.
\newblock URL \url{http://leloykun.github.io/ponder/muon-opt-coeffs/}.

\bibitem[Cisse et~al.(2017)Cisse, Bojanowski, Grave, Dauphin, and Usunier]{cisse2017parseval}
Moustapha Cisse, Piotr Bojanowski, Edouard Grave, Yann Dauphin, and Nicolas Usunier.
\newblock Parseval networks: Improving robustness to adversarial examples.
\newblock In \emph{International Conference on Machine Learning}, 2017.

\bibitem[Dehghani et~al.(2023)Dehghani, Djolonga, Mustafa, Padlewski, Heek, Gilmer, Steiner, Caron, Geirhos, Alabdulmohsin, Jenatton, Beyer, Tschannen, Arnab, Wang, Riquelme, Minderer, Puigcerver, Evci, Kumar, van Steenkiste, Elsayed, Mahendran, Yu, Oliver, Huot, Bastings, Collier, Gritsenko, Birodkar, Vasconcelos, Tay, Mensink, Kolesnikov, Pavetić, Tran, Kipf, Lučić, Zhai, Keysers, Harmsen, and Houlsby]{dehghani2023scalingvisiontransformers22}
Mostafa Dehghani, Josip Djolonga, Basil Mustafa, Piotr Padlewski, Jonathan Heek, Justin Gilmer, Andreas Steiner, Mathilde Caron, Robert Geirhos, Ibrahim Alabdulmohsin, Rodolphe Jenatton, Lucas Beyer, Michael Tschannen, Anurag Arnab, Xiao Wang, Carlos Riquelme, Matthias Minderer, Joan Puigcerver, Utku Evci, Manoj Kumar, Sjoerd van Steenkiste, Gamaleldin~F. Elsayed, Aravindh Mahendran, Fisher Yu, Avital Oliver, Fantine Huot, Jasmijn Bastings, Mark~Patrick Collier, Alexey Gritsenko, Vighnesh Birodkar, Cristina Vasconcelos, Yi~Tay, Thomas Mensink, Alexander Kolesnikov, Filip Pavetić, Dustin Tran, Thomas Kipf, Mario Lučić, Xiaohua Zhai, Daniel Keysers, Jeremiah Harmsen, and Neil Houlsby.
\newblock Scaling vision transformers to 22 billion parameters.
\newblock In \emph{International Conference on Machine Learning}, 2023.

\bibitem[Dherin et~al.(2022)Dherin, Munn, Rosca, and Barrett]{dherin2022why}
Benoit Dherin, Michael Munn, Mihaela Rosca, and David~GT Barrett.
\newblock Why neural networks find simple solutions: The many regularizers of geometric complexity.
\newblock In \emph{Neural Information Processing Systems}, 2022.

\bibitem[Elthakeb et~al.(2020)Elthakeb, Pilligundla, Mireshghallah, Cloninger, and Esmaeilzadeh]{pmlr-v119-elthakeb20a}
Ahmed~Taha Elthakeb, Prannoy Pilligundla, Fatemeh Mireshghallah, Alexander Cloninger, and Hadi Esmaeilzadeh.
\newblock Divide and conquer: Leveraging intermediate feature representations for quantized training of neural networks.
\newblock In \emph{International Conference on Machine Learning}, 2020.

\bibitem[Fazlyab et~al.(2019)Fazlyab, Robey, Hassani, Morari, and Pappas]{fazlyab2023efficientaccurateestimationlipschitz}
Mahyar Fazlyab, Alexander Robey, Hamed Hassani, Manfred Morari, and George~J. Pappas.
\newblock Efficient and accurate estimation of {L}ipschitz constants for deep neural networks.
\newblock In \emph{Neural Information Processing Systems}, 2019.

\bibitem[Flynn(2017)]{flynn2017duality}
Thomas Flynn.
\newblock The duality structure gradient descent algorithm: Analysis and applications to neural networks.
\newblock \emph{arXiv:1708.00523}, 2017.

\bibitem[Gogianu et~al.(2021)Gogianu, Berariu, Rosca, Clopath, Buşoniu, and Pascanu]{Gogianu2021SpectralNF}
Florin Gogianu, Tudor Berariu, Mihaela Rosca, Claudia Clopath, Lucian Buşoniu, and Razvan Pascanu.
\newblock Spectral normalisation for deep reinforcement learning: An optimisation perspective.
\newblock In \emph{International Conference on Machine Learning}, 2021.

\bibitem[Gouk et~al.(2021)Gouk, Frank, Pfahringer, and Cree]{gouk2020regularisationneuralnetworksenforcing}
Henry Gouk, Eibe Frank, Bernhard Pfahringer, and Michael~J. Cree.
\newblock Regularisation of neural networks by enforcing {L}ipschitz continuity.
\newblock \emph{Machine Learning}, 2021.

\bibitem[He et~al.(2016)He, Zhang, Ren, and Sun]{he2015deepresiduallearningimage}
Kaiming He, Xiangyu Zhang, Shaoqing Ren, and Jian Sun.
\newblock Deep residual learning for image recognition.
\newblock In \emph{Computer Vision and Pattern Recognition}, 2016.

\bibitem[Henry et~al.(2020)Henry, Dachapally, Pawar, and Chen]{henry2020querykeynormalizationtransformers}
Alex Henry, Prudhvi~Raj Dachapally, Shubham Pawar, and Yuxuan Chen.
\newblock Query-key normalization for transformers.
\newblock In \emph{Empirical Methods in Natural Language Processing}, 2020.

\bibitem[Huang et~al.(2021)Huang, Zhang, Shi, Kolter, and Anandkumar]{huang2021trainingcertifiablyrobustneural}
Yujia Huang, Huan Zhang, Yuanyuan Shi, J.~Zico Kolter, and Anima Anandkumar.
\newblock Training certifiably robust neural networks with efficient local {L}ipschitz bounds.
\newblock In \emph{Neural Information Processing Systems}, 2021.

\bibitem[Jianlin(2024)]{kexuefm-spectral-weight-decay}
Su~Jianlin.
\newblock Thoughts from spectral norm gradient to new weight decay, Dec 2024.
\newblock URL \url{https://kexue.fm/archives/10648}.

\bibitem[Jordan et~al.(2024{\natexlab{a}})Jordan, Bernstein, Rappazzo, @fernbear.bsky.social, Vlado, Jiacheng, Cesista, Koszarsky, and @Grad62304977]{modded_nanogpt_2024}
Keller Jordan, Jeremy Bernstein, Brendan Rappazzo, @fernbear.bsky.social, Boza Vlado, You Jiacheng, Franz Cesista, Braden Koszarsky, and @Grad62304977.
\newblock Modded-{nanoGPT}: Speedrunning the {nanoGPT} baseline, 2024{\natexlab{a}}.
\newblock URL \url{https://github.com/KellerJordan/modded-nanogpt}.
\newblock MIT License.

\bibitem[Jordan et~al.(2024{\natexlab{b}})Jordan, Jin, Boza, Jiacheng, Cesista, Newhouse, and Bernstein]{jordan2024muon}
Keller Jordan, Yuchen Jin, Vlado Boza, You Jiacheng, Franz Cesista, Laker Newhouse, and Jeremy Bernstein.
\newblock Muon: An optimizer for hidden layers in neural networks, 2024{\natexlab{b}}.
\newblock URL \url{https://kellerjordan.github.io/posts/muon/}.

\bibitem[Karpathy(2022)]{karpathy_nanoGPT}
Andrej Karpathy.
\newblock {nanoGPT}.
\newblock \url{https://github.com/karpathy/nanoGPT}, 2022.
\newblock MIT License.

\bibitem[Katz et~al.(2017)Katz, Barrett, Dill, Julian, and Kochenderfer]{katz}
Guy Katz, Clark Barrett, David~L. Dill, Kyle Julian, and Mykel~J. Kochenderfer.
\newblock Reluplex: An efficient {SMT} solver for verifying deep neural networks.
\newblock In \emph{International Conference on Computer Aided Verification}, 2017.

\bibitem[Kim et~al.(2021)Kim, Papamakarios, and Mnih]{kim2021lipschitzconstantselfattention}
Hyunjik Kim, George Papamakarios, and Andriy Mnih.
\newblock The {L}ipschitz constant of self-attention.
\newblock In \emph{International Conference on Machine Learning}, 2021.

\bibitem[Krizhevsky(2009)]{krizhevsky2009cifar}
Alex Krizhevsky.
\newblock Learning multiple layers of features from tiny images.
\newblock Technical report, University of Toronto, 2009.

\bibitem[Krogh and Hertz(1991)]{krogh1991simple}
Anders Krogh and John~A. Hertz.
\newblock A simple weight decay can improve generalization.
\newblock In \emph{Neural Information Processing Systems}, 1991.

\bibitem[Large et~al.(2024)Large, Liu, Huh, Bahng, Isola, and Bernstein]{large2024}
Tim Large, Yang Liu, Minyoung Huh, Hyojin Bahng, Phillip Isola, and Jeremy Bernstein.
\newblock Scalable optimization in the modular norm.
\newblock In \emph{Neural Information Processing Systems}, 2024.

\bibitem[Loshchilov and Hutter(2019)]{loshchilov2019decoupledweightdecayregularization}
Ilya Loshchilov and Frank Hutter.
\newblock Decoupled weight decay regularization.
\newblock In \emph{International Conference on Learning Representations}, 2019.

\bibitem[Miyato et~al.(2018)Miyato, Kataoka, Koyama, and Yoshida]{miyato2018spectralnormalizationgenerativeadversarial}
Takeru Miyato, Toshiki Kataoka, Masanori Koyama, and Yuichi Yoshida.
\newblock Spectral normalization for generative adversarial networks.
\newblock In \emph{International Conference on Learning Representations}, 2018.

\bibitem[{Moonshot AI}(2025)]{kimik2_2025}
{Moonshot AI}.
\newblock Kimi k2: Open agentic intelligence, July 2025.
\newblock URL \url{https://moonshotai.github.io/Kimi-K2/}.
\newblock Technical report on Kimi K2, a 1T parameter Mixture-of-Experts model with 32B activated parameters.

\bibitem[Neyshabur et~al.(2018)Neyshabur, Bhojanapalli, and Srebro]{neyshabur2018a}
Behnam Neyshabur, Srinadh Bhojanapalli, and Nathan Srebro.
\newblock A {PAC}-bayesian approach to spectrally-normalized margin bounds for neural networks.
\newblock In \emph{International Conference on Learning Representations}, 2018.

\bibitem[O’Connell et~al.(2022)O’Connell, Shi, Shi, Azizzadenesheli, Anandkumar, Yue, and Chung]{neural-fly}
Michael O’Connell, Guanya Shi, Xichen Shi, Kamyar Azizzadenesheli, Anima Anandkumar, Yisong Yue, and Soon-Jo Chung.
\newblock Neural-fly enables rapid learning for agile flight in strong winds.
\newblock \emph{Science Robotics}, 2022.

\bibitem[Penedo et~al.(2024)Penedo, Kydlíček, allal, Lozhkov, Mitchell, Raffel, Werra, and Wolf]{penedo2024finewebdatasetsdecantingweb}
Guilherme Penedo, Hynek Kydlíček, Loubna~Ben allal, Anton Lozhkov, Margaret Mitchell, Colin Raffel, Leandro~Von Werra, and Thomas Wolf.
\newblock The {FineWeb} datasets: Decanting the web for the finest text data at scale.
\newblock In \emph{Neural Information Processing Systems: Datasets and Benchmarks Track}, 2024.
\newblock ODC-By 1.0 License.

\bibitem[Pethick et~al.(2025)Pethick, Xie, Antonakopoulos, Zhu, Silveti-Falls, and Cevher]{pethick2025trainingdeeplearningmodels}
Thomas Pethick, Wanyun Xie, Kimon Antonakopoulos, Zhenyu Zhu, Antonio Silveti-Falls, and Volkan Cevher.
\newblock Training deep learning models with norm-constrained {LMO}s.
\newblock \emph{arXiv:2502.07529}, 2025.

\bibitem[Qi et~al.(2023)Qi, Wang, Chen, Shi, and Zhang]{qi2023lipsformer}
Xianbiao Qi, Jianan Wang, Yihao Chen, Yukai Shi, and Lei Zhang.
\newblock Lips{F}ormer: Introducing {L}ipschitz continuity to vision transformers.
\newblock In \emph{International Conference on Learning Representations}, 2023.

\bibitem[Rosca et~al.(2020)Rosca, Weber, Gretton, and Mohamed]{pmlr-v137-rosca20a}
Mihaela Rosca, Theophane Weber, Arthur Gretton, and Shakir Mohamed.
\newblock A case for new neural network smoothness constraints.
\newblock In \emph{NeurIPS Workshop on "I Can't Believe It's Not Better!"}, 2020.

\bibitem[Su(2025)]{kexuefm-10795}
Jianlin Su.
\newblock Higher-order mup: A more concise but more intelligent spectral condition scaling, Mar 2025.
\newblock URL \url{https://kexue.fm/archives/10795}.

\bibitem[Szegedy et~al.(2014)Szegedy, Zaremba, Sutskever, Bruna, Erhan, Goodfellow, and Fergus]{szegedy2013intriguing}
Christian Szegedy, Wojciech Zaremba, Ilya Sutskever, Joan Bruna, Dumitru Erhan, Ian Goodfellow, and Rob Fergus.
\newblock Intriguing properties of neural networks.
\newblock In \emph{International Conference on Learning Representations}, 2014.

\bibitem[Tsuzuku et~al.(2018)Tsuzuku, Sato, and Sugiyama]{tsuzuku2018lipschitzmargintrainingscalablecertification}
Yusuke Tsuzuku, Issei Sato, and Masashi Sugiyama.
\newblock Lipschitz-margin training: Scalable certification of perturbation invariance for deep neural networks.
\newblock In \emph{Neural Information Processing Systems}, 2018.

\bibitem[Vaswani et~al.(2017)Vaswani, Shazeer, Parmar, Uszkoreit, Jones, Gomez, Kaiser, and Polosukhin]{vaswani2017attention}
Ashish Vaswani, Noam Shazeer, Niki Parmar, Jakob Uszkoreit, Llion Jones, Aidan~N Gomez, {\L}ukasz Kaiser, and Illia Polosukhin.
\newblock Attention is all you need.
\newblock In \emph{Neural Information Processing Systems}, 2017.

\bibitem[Virmaux and Scaman(2018)]{NEURIPS2018_d54e99a6}
Aladin Virmaux and Kevin Scaman.
\newblock Lipschitz regularity of deep neural networks: Analysis and efficient estimation.
\newblock In \emph{Neural Information Processing Systems}, 2018.

\bibitem[Wang et~al.(2020)Wang, Weng, and Daniel]{Wang2019VerificationON}
Yuh-Shyang Wang, Tsui-Wei Weng, and Luca Daniel.
\newblock Verification of neural network control policy under persistent adversarial perturbation.
\newblock In \emph{International Conference on Machine Learning}, 2020.

\bibitem[Weng et~al.(2018)Weng, Zhang, Chen, Song, Hsieh, Daniel, Boning, and Dhillon]{pmlr-v80-weng18a}
Lily Weng, Huan Zhang, Hongge Chen, Zhao Song, Cho-Jui Hsieh, Luca Daniel, Duane Boning, and Inderjit Dhillon.
\newblock Towards fast computation of certified robustness for {R}e{LU} networks.
\newblock In \emph{International Conference on Machine Learning}, 2018.

\bibitem[Weng et~al.(2019)Weng, Chen, Nguyen, Squillante, Boopathy, Oseledets, and Daniel]{pmlr-v97-weng19a}
Lily Weng, Pin-Yu Chen, Lam Nguyen, Mark Squillante, Akhilan Boopathy, Ivan Oseledets, and Luca Daniel.
\newblock {PROVEN}: Verifying robustness of neural networks with a probabilistic approach.
\newblock In \emph{International Conference on Machine Learning}, 2019.

\bibitem[Weng et~al.(2020)Weng, Zhao, Liu, Chen, Lin, and Daniel]{Weng_Zhao_Liu_Chen_Lin_Daniel_2020}
Tsui-Wei Weng, Pu~Zhao, Sijia Liu, Pin-Yu Chen, Xue Lin, and Luca Daniel.
\newblock Towards certificated model robustness against weight perturbations.
\newblock In \emph{AAAI Conference on Artificial Intelligence}, 2020.

\bibitem[Wortsman et~al.(2024)Wortsman, Liu, Xiao, Everett, Alemi, Adlam, Co-Reyes, Gur, Kumar, Novak, Pennington, Sohl-Dickstein, Xu, Lee, Gilmer, and Kornblith]{wortsman2023smallscaleproxieslargescaletransformer}
Mitchell Wortsman, Peter~J. Liu, Lechao Xiao, Katie Everett, Alex Alemi, Ben Adlam, John~D. Co-Reyes, Izzeddin Gur, Abhishek Kumar, Roman Novak, Jeffrey Pennington, Jascha Sohl-Dickstein, Kelvin Xu, Jaehoon Lee, Justin Gilmer, and Simon Kornblith.
\newblock Small-scale proxies for large-scale transformer training instabilities.
\newblock In \emph{International Conference on Learning Representations}, 2024.

\bibitem[Yang et~al.(2024)Yang, Simon, and Bernstein]{yang2024spectralconditionfeaturelearning}
Greg Yang, James~B. Simon, and Jeremy Bernstein.
\newblock A spectral condition for feature learning.
\newblock \emph{arXiv:2310.17813}, 2024.

\bibitem[Yoshida and Miyato(2017)]{yoshida2017spectralnormregularizationimproving}
Yuichi Yoshida and Takeru Miyato.
\newblock Spectral norm regularization for improving the generalizability of deep learning.
\newblock \emph{arXiv:1705.10941}, 2017.

\bibitem[You(2025)]{You2025}
Jiacheng You.
\newblock Rapidly converging orthogonalizing {N}ewton-{S}chulz iteration, 2025.
\newblock URL \url{https://x.com/YouJiacheng/status/1893704552689303901}.

\bibitem[Zhao et~al.(2024)Zhao, Li, Gu, Zheng, Kölker, Wang, and Yuan]{zhao2024adapproxadaptiveapproximationadam}
Pengxiang Zhao, Ping Li, Yingjie Gu, Yi~Zheng, Stephan~Ludger Kölker, Zhefeng Wang, and Xiaoming Yuan.
\newblock Adapprox: Adaptive approximation in {A}dam optimization via randomized low-rank matrices.
\newblock \emph{arXiv:2403.14958}, 2024.

\end{thebibliography}

\clearpage

\appendix

\section{Coupling spectral cap to learning rate}
\label{app:spectral-softcap}

This section proves \cref{thm:spectral-soft-cap-bound}: spectral soft cap bounds weight norm. We will derive a strength parameter that couples to the learning rate, because otherwise using odd polynomial approximation---rather than the ideal map $\min(\sigma_\text{max}, \sigma)$---accumulates errors when the learning rate falls below the approximation gap. We will prove the theorem using an equilbrium analysis: solving for the fixed point of a contractive map.

To warm up, there is a special case in which weight decay provably bounds the weight norm during training. It happens when the norm of the update is bounded by the learning rate, $\norm{\Delta W} \leq \eta$. To see why, suppose weight decay is applied at every step of training and is linearly coupled to the learning rate as $\lambda \eta$ for some constant $\lambda > 0$. Subadditivity of norms guarantees $\norm{W + \Delta W} \leq \norm{W} + \norm{\Delta W} \leq \norm{W} + \eta$. The weights cannot increase further when the decay and the learning rate are in equilbrium: $\norm{W} \cdot (1 - \lambda \eta) + \eta = \norm{W}$. The equilibrium occurs at $\norm{W} = 1/\lambda$. Therefore $1/\lambda$ is the maximum weight norm possible under standard weight decay, if the update norm is bounded above by $\eta$. \citet{pethick2025trainingdeeplearningmodels} noted the same phenomenon.

To prove \cref{thm:spectral-soft-cap-bound}, we conduct a general equilibrium analysis to consider three effects: weight decay, optimizer step, and weight projection. The net effect of the three effects should decrease every singular value $\sigma \geq \sigma_\text{max}$. Let the weight decay $\lambda$ be coupled to the learning rate $\eta$. Suppose the weight update is constrained to have norm $\norm{\Delta W} \leq \eta$. Let $p(x)$ be an odd polynomial. Equilbrium occurs when \begin{equation}
    \label{eq:equilibrium}
    p(x \cdot (1 - \lambda \eta) + \eta) \leq x.
\end{equation} In words, apply weight decay, apply an optimizer step that will not increase singular values by more than $\eta$, and then apply the odd polynomial. If the singular value does not increase for all $x \in [0, \sigma_\text{max}]$, then the weight norm will never exceed $\sigma_\text{max}$.

Recall that $p_1(x) = x - \alpha x^3$ and $p_2(x) = x + \alpha x^3$.

When $p(x) = p_2(p_1(x))$, the equilibrium condition \cref{eq:equilibrium} becomes \begin{gather}
    f(\alpha) = \left(k-\alpha k^{3}\right)+\alpha\left(k-\alpha k^{3}\right)^{3} - \sigma_\text{max} \leq 0, \\
    \text{where } \; k = \sigma_\text{max} \cdot (1 - \lambda \eta) + \eta.
\end{gather} We can consider only $x = \sigma_\text{max}$ because, in this case, larger $x$ will decrease if a smaller $x$ decreases. Here $\alpha$ is the free variable. Finding the smallest $\alpha > 0$ amounts to solving the quartic polynomial \begin{equation}
    \label{eq:generalized-coupling}
    -k^9 \alpha^4 + 3k^7 \alpha^3 - 3k^5 \alpha^2 + k - \sigma_\text{max} = 0.
\end{equation} Any numerical solver can approximate $\alpha$. This is the $\alpha$ that makes $\sigma_\text{max}$ a fixed point under the overall training step. Connecting to the earlier equilibrium analysis, the special case $\alpha = 0$ is possible when already $k = \sigma_\text{max}$, or $\sigma_\text{max} = 1/\lambda$. While the coupling for spectral soft capping is not linear as in common implementations of AdamW, it succeeds at making tight weight norm bounds compatible with learning rate schedules that may tend to $0$. Initializing the weights near $\sigma_\text{max}$, rather than strictly less than it, suffices for the bound to hold throughout training in practice.

One limitation of automatic coupling is that it may be stronger than necessary, because it assumes updates align perfectly with the weights in the worst case. If the learning rate is scheduled to $0$, gradients may align less with the existing weights especially at the end, which can cause the weight norm to contract slightly.

\begin{figure}[t]
    \centering
    \includegraphics[width=0.4\textwidth]{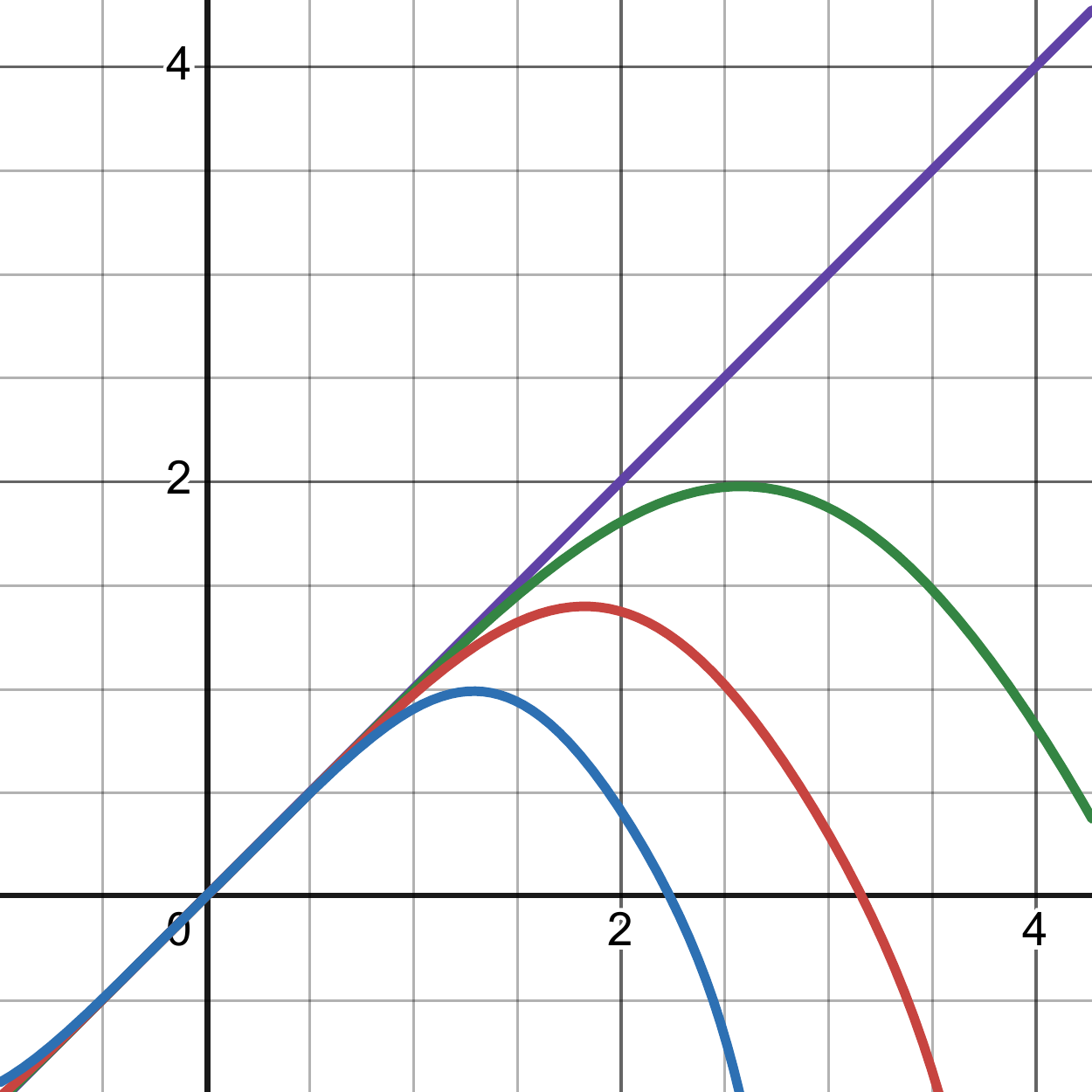}
    \caption{\textbf{Spectral soft cap} is a weight constraint method that applies the above odd polynomial to a weight matrix, which applies it to all singular values in parallel. First it applies $p_1(x) = x - \alpha x^3$, then it applies $p_2(x) = x + \alpha x^3$. The composition is depicted for $\alpha = 0.2$ (blue), $\alpha = 0.1$ (red), $\alpha = 0.05$ (green), $\alpha = 0$ (purple).}
    \label{fig:spectral-soft-cap-graph}
\end{figure}

\clearpage
\section{Spectral clipping}
\label{app:spectral-hardcap}

Spectral clipping is a generalization of spectral cap that puts an upper \textit{and} lower bound on the singular values. It maps $\sigma \mapsto \text{clip}(\sigma, \sigma_\text{min}, \sigma_\text{max})$, where $0 < \sigma_\text{min} < \sigma_\text{max}$ \citep{cesista2025spectralclipping, kexuefm-10795}.
\begin{definition}[Spectral clipping] Let $W \in \mathbb{R}^{m \times n}$ and $W = U \Sigma V^T$ be its singular value decomposition, where $\Sigma = (\sigma_1, \ldots, \sigma_{\min(m,n)})$ are the singular values of $W$. Then we define spectral clipping as the matrix function $\texttt{spectral\_clip}_{[\alpha, \beta]}: \mathbb{R}^{m \times n} \to \mathbb{R}^{m \times n}$ acting on the singular values of $W$,
\begin{equation}\texttt{spectral\_clip}_{[\alpha, \beta]}(W) = U \texttt{clip}_{[\alpha, \beta]}(\Sigma) V^T,\label{1}\end{equation}
where $\alpha \leq \beta$ and $\alpha, \beta \in \mathbb{R} \cup \{-\infty, \infty\}$ control the minimum and maximum attainable singular values. The clip function $\texttt{clip}_{[\alpha, \beta]}: \mathbb{R} \to \mathbb{R}$ is applied element-wise on the singular values of $W$,
\begin{equation}\texttt{clip}_{[\alpha, \beta]}(x) = \begin{cases}
\alpha & \text{if } x < \alpha \\
x & \text{if } \alpha \leq x \leq \beta \\
\beta & \text{if } \beta < x
\end{cases}.\end{equation}
\end{definition}

We could compute spectral clipping via SVD. However, SVD does not take full advantage of GPU tensor cores and typically requires casting the inputs to full precision, making it slow in practice. Following \citet{cesista2025spectralclipping}, we can instead spectrally clip via the matrix sign function used by Muon \citep{jordan2024muon} because of the following identity,
\begin{equation}\label{eq:spectralclipraw}\texttt{clip}_{[\alpha, \beta]}(x) = \frac{1}{2}[\alpha + \beta + (\alpha - x)\texttt{sign}(\alpha - x) - (\beta - x)\texttt{sign}(\beta - x)].\end{equation}
Thus,
\begin{align*}
    \texttt{spectral\_clip}_{[\alpha, \beta]}(W)
        &= U \texttt{clip}_{[\alpha, \beta]}(\Sigma) V^T\nonumber\\
        &= U \frac{1}{2}[(\alpha + \beta) I + (\alpha I - \Sigma)\texttt{sign}(\alpha I - \Sigma) - (\beta I - \Sigma)\texttt{sign}(\beta I - \Sigma)] V^T\nonumber\\
        &= \frac{1}{2} [(\alpha + \beta) UV^T + U (\alpha I - \Sigma ) \texttt{sign}(\alpha I - \Sigma) V^T - U (\beta I - \Sigma ) \texttt{sign}(\beta I - \Sigma) V^T]\nonumber\\
        &= \frac{1}{2} [(\alpha + \beta) UV^T\nonumber\\
        &\qquad+ U (\alpha I - \Sigma ) (V^TV) \texttt{sign}(\alpha I - \Sigma) (U^TU) V^T\nonumber\\
        &\qquad- U (\beta I - \Sigma ) (V^TV) \texttt{sign}(\beta I - \Sigma) (U^TU) V^T]\nonumber\\
        &= \frac{1}{2} [(\alpha + \beta) UV^T\nonumber\\
        &\qquad+ (\alpha UV^T - U\Sigma V^T) (V \texttt{sign}(\alpha I - \Sigma) U^T)(UV^T)\nonumber\\
        &\qquad- (\beta UV^T - U\Sigma V^T)  (V \texttt{sign}(\beta I - \Sigma) U^T)(UV^T)]\nonumber\\
        &= \frac{1}{2} [(\alpha + \beta) UV^T\nonumber\\
        &\qquad+ (\alpha UV^T - U\Sigma V^T) (U \texttt{sign}(\alpha I - \Sigma) V^T)^T(UV^T)\nonumber\\
        &\qquad- (\beta UV^T - U\Sigma V^T)  (U \texttt{sign}(\beta I - \Sigma) V^T)^T(UV^T)]\nonumber\\
        &= \frac{1}{2} [(\alpha + \beta) \texttt{msign}(W)\nonumber\\
        &\qquad+ (\alpha \cdot\texttt{msign}(W) - W) \texttt{msign}(\alpha \cdot\texttt{msign}(W) - W)^T\texttt{msign}(W)\nonumber\\
        &\qquad- (\beta  \cdot\texttt{msign}(W) - W) \texttt{msign}(\beta  \cdot\texttt{msign}(W) - W)^T\texttt{msign}(W)]\nonumber\\
    \texttt{spectral\_clip}_{[\alpha, \beta]}(W)
        &= \frac{1}{2} [(\alpha + \beta)I\nonumber\\
        &\qquad+ (\alpha \cdot\texttt{msign}(W) - W) \texttt{msign}(\alpha \cdot\texttt{msign}(W) - W)^T\nonumber\\
        &\qquad- (\beta  \cdot\texttt{msign}(W) - W) \texttt{msign}(\beta  \cdot\texttt{msign}(W) - W)^T\nonumber\\
        &\qquad]\texttt{msign}(W) \tag{\refstepcounter{equation}\theequation}.
\end{align*}

\subsection{Spectral hardcapping}

Spectral hardcapping is a special case of spectral clipping where $\alpha \leq 0$. And since singular values are always nonnegative, setting $\alpha = -\beta$ simplifies \cref{eq:spectralclipraw} to
\begin{equation}
    \texttt{clip}_{[-\beta, \beta]}(x) = \frac{1}{2}[\beta + x - (\beta - x)\texttt{sign}(\beta - x).
\end{equation}
We can now construct a spectral hardcapping formula in terms of the well-known \texttt{msign} function:
\begin{align*}
    \texttt{spectral\_hardcap}_{\beta}(W)
        &= U \texttt{clip}_{[-\beta, \beta]}(\Sigma) V^T \\
        &= U \frac{1}{2}[\beta I + \Sigma - (\beta I- \Sigma)\texttt{sign}(\beta I - \Sigma)] V^T \\
        &= \frac{1}{2}[\beta UV^T + U\Sigma V^T - U (\beta I- \Sigma)(V^T 
        V)\texttt{sign}(\beta I - \Sigma) (U^T U) V^T \\
        &= \frac{1}{2}[\beta UV^T + U\Sigma V^T - (U (\beta I- \Sigma)V^T) (U\texttt{sign}(\beta I - \Sigma) V^T)^T (U V^T) \\
        &= \frac{1}{2}[\beta \texttt{msign}(W) + W - (\beta\texttt{msign}(W) - W) \texttt{msign}(\beta\texttt{msign}(W) - W)^T \texttt{msign}(W)] \\
    \texttt{spectral\_hardcap}_{\beta}(W)
        &= \frac{1}{2}[\beta\texttt{msign}(W) + W - \texttt{msign}(\beta I - \texttt{msign}(W)W^T) (\beta\texttt{msign}(W) - W)].
\end{align*}
The last equality follows from the transpose equivariance and unitary multiplication equivariance of odd analytic matrix functions acting on singular values.

\subsection{spectrally clipped weight decay}

As a further extension, we can use spectral hardcapping to construct a spectrally clipped weight decay. Unlike standard weight decay, spectrally clipped weight decay only applies the decay term to singular values larger than a threshold $\beta$, chosen \textit{a priori}:
\begin{align}
\texttt{clipped\_weight\_decay}_{\lambda,\beta}(\Sigma)
    &= \begin{cases}
        \Sigma & \texttt{if } \Sigma \leq \beta\\
        (1-\lambda)\Sigma + \lambda\beta & \texttt{if } \Sigma > \beta \\
    \end{cases}\\
    &= (1-\lambda)\Sigma + \lambda\cdot\texttt{clip}_{[0, \beta]}(\Sigma) \nonumber.
\end{align}
Thus,
\begin{align}
    \texttt{spectral\_clipped\_weight\_decay}_{\lambda,\beta}(W)
        &= U \texttt{clipped\_weight\_decay}_{\lambda,\beta}(\Sigma) V^T\\
        &= (1-\lambda) W + \lambda\cdot\texttt{spectral\_hardcap}_\beta(W)
\end{align}

Following the argument made in \cref{app:spectral-softcap}, we can derive the equilibrium point of spectrally clipped weight decay as follows.
\begin{proposition}[Equilibrium point of spectrally clipped weight decay]
    Let $\eta \in (0, \infty)$ be the learning rate, $\lambda \in (0, 1]$ be the decay term, and $\beta \in (0, \infty)$ be the singular value threshold above which we start applying the decay term. Additionally, suppose that the weight updates are constrained to have norm $||\Delta W|| \leq \eta$ such as with the Muon optimizer. Then spectrally clipped weight decay has an equilibrium point, \begin{equation} \sigma_{\text{eq}} = \beta + \frac{1-\lambda}{\lambda}\eta,\end{equation}
    toward which it ``pulls'' the spectral norm of the weights.
\end{proposition}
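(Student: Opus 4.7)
The plan is to mirror the equilibrium analysis used for spectral soft cap in \cref{app:spectral-softcap}: model a single worst-case training step on the largest singular value, write the step as a one-dimensional map $T$, and locate its fixed point. First I would bound the effect of the Muon step. By Weyl's inequality, adding a matrix of spectral norm at most $\eta$ can raise any singular value by at most $\eta$, and this bound is tight when $\Delta W$ is rank-one and aligned with the top singular direction of $W$. So in the worst case the optimizer acts on the top singular value as $\sigma \mapsto \sigma + \eta$.

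Next I would compose this with spectrally clipped weight decay. Anticipating that the equilibrium lies above the threshold $\beta$, I focus on the active branch of the piecewise definition, where the post-step singular value is
\[
T(\sigma) \;=\; (1-\lambda)(\sigma + \eta) + \lambda \beta.
\]
Setting $T(\sigma_{\text{eq}}) = \sigma_{\text{eq}}$ yields a linear equation whose unique solution is $\sigma_{\text{eq}} = \beta + \tfrac{1-\lambda}{\lambda}\eta$. A quick consistency check confirms $\sigma_{\text{eq}} \geq \beta$, so $\sigma_{\text{eq}} + \eta > \beta$ and the active branch is indeed the relevant regime at the fixed point.

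To justify the word \emph{pulls}, I would note that $T'(\sigma) = 1 - \lambda < 1$ for $\lambda \in (0,1]$, so $T$ is a strict contraction on the active regime with $|T(\sigma) - \sigma_{\text{eq}}| = (1-\lambda)\,|\sigma - \sigma_{\text{eq}}|$. Iterating $T$ therefore drives any singular value starting above $\beta - \eta$ geometrically toward $\sigma_{\text{eq}}$. For singular values in the inactive regime $\sigma \leq \beta - \eta$, the map is simply $\sigma \mapsto \sigma + \eta$, which increases $\sigma$ monotonically until it crosses into the active regime; from there the contraction takes over.

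The main subtlety will be the distinction between the ``worst case'' map $T$ and the actual dynamics. Muon's update need not align with the top singular direction, so the true post-step singular value is at most $T(\sigma)$ rather than exactly $T(\sigma)$. The precise conclusion is therefore that $\sigma_{\text{eq}}$ is an upper bound that the spectral norm cannot permanently exceed, and that any excursion above $\sigma_{\text{eq}}$ is damped by a factor of $1-\lambda$ per step. A minor additional case to handle is an overshoot from below, where a gradient in the inactive regime pushes the singular value past $\sigma_{\text{eq}}$ in a single step; contractivity of $T$ absorbs this, so the qualitative picture of ``pulling toward $\sigma_{\text{eq}}$'' survives.
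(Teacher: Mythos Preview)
Your proposal is correct and follows essentially the same approach as the paper: bound the worst-case optimizer step on the top singular value by $\sigma \mapsto \sigma + \eta$, compose with the active branch of clipped weight decay to obtain the affine map $T(\sigma) = (1-\lambda)(\sigma+\eta) + \lambda\beta$, and solve $T(\sigma_{\text{eq}}) = \sigma_{\text{eq}}$. Your contraction-coefficient argument for stability is equivalent to (and slightly cleaner than) the paper's explicit $\sigma_{\text{eq}} \pm \epsilon$ computation, and your separate handling of the inactive regime and the worst-case-versus-actual caveat adds rigor the paper leaves implicit.
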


\begin{proof} An update step yields
    $$W_{t+1} = \texttt{spectral\_clipped\_weight\_decay}_{\lambda,\beta}(W_t + \Delta W_t).$$
    The subadditivity of norms tells us $||W_t + \Delta W_t|| \leq ||W_t|| + ||\Delta W_t|| \leq ||W_t|| + \eta$. Thus, we can bound the spectral norm of the weights after every update step as 
\begin{align*}
    \sigma'_{\max} &\leq \texttt{clipped\_weight\_decay}_{\lambda,\beta}(\sigma_{\max} + \eta)\\
    \sigma'_{\max} &\leq \begin{cases}
        \sigma_{\max} + \eta & \texttt{if } \sigma_{\max} + \eta \leq \beta\\
        (1-\lambda)(\sigma_{\max} + \eta) + \lambda\beta & \texttt{if } \sigma_{\max} + \eta > \beta
    \end{cases}
\end{align*}
    Equality is achieved at $\sigma_{\text{eq}}$, where
\begin{align*}
    \sigma_{\text{eq}} &= \begin{cases}
        \sigma_{\text{eq}} + \eta & \texttt{if } \sigma_{\text{eq}} + \eta \leq \beta\\
        (1-\lambda)(\sigma_{\text{eq}} + \eta) + \lambda\beta & \texttt{if } \sigma_{\text{eq}} + \eta > \beta
    \end{cases}\\
    \sigma_{\text{eq}} &= (1-\lambda)\sigma_{\text{eq}} + (1-\lambda)\eta + \lambda\beta\\
    \sigma_{\text{eq}} &= \beta + \frac{1-\lambda}{\lambda}\eta
\end{align*}
    Note that singular values larger than $\sigma_{\text{eq}}$ decrease after every update step,
\begin{align*}
    \text{update}(\sigma_{\text{eq}} + \epsilon) &= (1-\lambda)(\sigma_{\text{eq}} + \eta + \epsilon) + \lambda\beta\\
    &= \underbrace{(1-\lambda)(\sigma_{\text{eq}} + \eta) + \lambda\beta}_{\sigma_{\text{eq}}} + (1-\lambda)\epsilon\\
    \text{update}(\sigma_{\text{eq}} + \epsilon) &< \sigma_{\text{eq}} + \epsilon,
\end{align*}
    since $1-\lambda < 1$, while singular values smaller than $\sigma_{\text{eq}}$ increase,
\begin{align*}
    \text{update}(\sigma_{\text{eq}} - \epsilon) &= (1-\lambda)(\sigma_{\text{eq}} + \eta - \epsilon) + \lambda\beta\\
    &= \sigma_{\text{eq}} - (1-\lambda)\epsilon\\
    \text{update}(\sigma_{\text{eq}} - \epsilon) &> \sigma_{\text{eq}} - \epsilon.
\end{align*}
    Hence $\sigma_{\text{eq}}$ is an equilibrium point.
\end{proof}

A potentially useful property of spectrally clipped weight decay is that its equilibrium point approaches $\beta$ as learning rate is decayed to zero during training, independent of the chosen initial learning rate and decay term:
$$\sigma^*_{\text{eq}} = \lim_{\eta \to 0} \beta + \frac{1-\lambda}{\lambda}\eta = \beta.$$
This property may enable tighter final norm bounds without requiring as aggressive of a decay.

\clearpage
\section{Proving an upper bound on the Lipschitz constant of a transformer}
\label{app:lipschitz-attention}

We elaborate on the algorithm sketched in \cref{subsec:lipschitz-constant-of-transformer} and prove a Lipschitz bound on attention. Our Lipschitz bounds are with respect to the max $\rms$ norm over token positions, denoted $\norm{\cdot}_{\infty\rms}$.

Recall the two primary ways Lipschitz constants $L_f$ and $L_g$ of two functions $f$ and $g$ interact:
\begin{itemize}
    \item \textbf{Adding:} $f + g$ has Lipschitz constant at most $L_f + L_g$.
    \item \textbf{Composing:} $f \circ g$ has Lipschitz constant at most $L_f \cdot L_g$.
\end{itemize}

\textbf{Step 1: Residual connections.} Suppose that, before reaching a certain residual connection, a transformer maps input data $x$ to $f(x)$ with Lipschitz constant $L$. Suppose the transformer has $2N$ residual connections. Let $\alpha = \tfrac{1}{2N}$. The residual connection acts on $f(x)$ as \begin{equation}
    \left[ (1 - \alpha) \cdot \mathsf{identity} + \alpha \cdot \mathsf{block} \right](f(x)).
\end{equation} After the residual connection, the Lipschitz constant composes and adds to become at most \begin{equation}
    (1 - \alpha) \cdot L + \alpha \cdot L \cdot L_\mathsf{block}.
\end{equation} Applying this formula sequentially upper bounds the Lipschitz constant of a transformer layer by layer. We now determine $L_\mathsf{block}$ for an MLP and attention block in terms of their weight norms.

\textbf{Step 2: MLP.} Our MLP composes $W_\mathsf{out} \circ (\mathsf{GeLU} / 1.1289) \circ W_\mathsf{in}$. The Lipschitz constants of the two weight matrices are their norms $\norm{W_\mathsf{out}}_{\rms\to\rms}$ and $\norm{W_\mathsf{in}}_{\rms\to\rms}$, while $\mathsf{GeLU} / 1.1289$ has Lipschitz constant $1$ because we divide by the maximum derivative of $\mathsf{GeLU}$. Overall, the Lipschitz bound for an MLP block is $L_\mathsf{MLP} \leq \norm{W_\mathsf{out}}_{\rms\to\rms} \norm{W_\mathsf{in}}_{\rms\to\rms} / 1.1289$.

\textbf{Step 3: Attention.} Let $\ell$ denote the token dimension. Let the queries, keys, and values be denoted by $(q, k, v) \in \R^{\ell \times d_Q} \times \R^{\ell \times d_Q} \times \R^{\ell \times d_V}$. Our attention block composes \begin{equation}
    \tfrac{1}{3} W_\mathsf{out} \circ F,
\end{equation} where function attention is denoted by $F = \softmax\left(\tfrac{1}{d_Q} qk^\top + M\right)v$ for some mask $M$. As a consequence of the following theorem, if every attention input is unit norm, then functional attention is $1$-Lipschitz. This property is what motivates scaling functional attention by $\tfrac{1}{d_Q}$ rather than $\smash{\tfrac{1}{\sqrt{d_Q}}}$ inside the softmax. Composing functional attention with its input, the tuple $(q, k, v)$, increases its sensitivity to $3$; we scale by $\tfrac13$ to make attention as a whole have unit sensitivity. However, functional attention is no longer $1$-Lipschitz if its inputs are not unit norm. Recall that the shorthand notation $\norm{x}_{\infty\rms}$ is the max RMS norm of a $d$-dimensional activation over $l$ tokens, $x \in \R^{\ell \times d}$.

\begin{theorem}[Lipschitz bound on functional attention]
    \label{thm:function-attention-is-lipschitz}
    Let $\diamond$ denote tensor contraction. Given any perturbations $\Delta q, \Delta k, \Delta v$ to the queries, keys, and values, functional attention satisfies \begin{equation}
    \norm{\nabla F(q, k, v) \diamond (\Delta q, \Delta k, \Delta v)} \leq \max(1, \norm{v} \max(\norm{q}, \norm{k})) \norm{(\Delta q, \Delta k, \Delta v)},
\end{equation} where the norm is $\norm{\cdot}_{\infty \rms} : \R^{\ell \times d} \to \R$, the max-over-tokens $\rms$ norm of the embedding vector, and $\norm{(\Delta q, \Delta k, \Delta v)} := \norm{\Delta q} + \norm{\Delta k} + \norm{\Delta v}$. That is, functional attention has Lipschitz bound $\max(1, \norm{v} \max(\norm{q}, \norm{k}))$.
\end{theorem}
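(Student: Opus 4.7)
The plan is to apply the chain rule to the decomposition $F(q,k,v) = P(q,k)\,v$, where $P = \softmax\bigl(\tfrac{1}{d_Q} qk^\top + M\bigr)$ is the row-stochastic attention matrix, writing
\begin{equation*}
    \nabla F \diamond (\Delta q, \Delta k, \Delta v) = (\nabla_q P \diamond \Delta q)\,v + (\nabla_k P \diamond \Delta k)\,v + P\,\Delta v,
\end{equation*}
and bounding each of the three pieces in $\norm{\cdot}_{\infty\rms}$ separately. The $v$-piece is immediate: since each row of $P$ is a probability distribution, $P\,\Delta v$ takes convex combinations of the rows of $\Delta v$, so $\norm{P\,\Delta v}_{\infty\rms} \le \norm{\Delta v}_{\infty\rms}$. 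This contribution delivers the ``$1$'' in the stated $\max(1,\cdot)$.

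For the $q$- and $k$-pieces, I would expand the softmax Jacobian $\partial p_{ij}/\partial s_{ij'} = p_{ij}(\delta_{jj'} - p_{ij'})$, so that token $i$ of the resulting perturbation equals
\begin{equation*}
    \sum_j p_{ij}(\Delta s_{ij} - \bar{\Delta s}_i)\,v_j \;=\; \sum_j p_{ij}(\Delta s_{ij} - \bar{\Delta s}_i)\bigl(v_j - (Pv)_i\bigr),
\end{equation*}
where $\bar{\Delta s}_i := \sum_k p_{ik}\Delta s_{ik}$; the second equality uses $\sum_j p_{ij}(\Delta s_{ij} - \bar{\Delta s}_i) = 0$, i.e.\ any token-independent constant may be subtracted from $v_j$ for free. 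Then I would convert $s_{ij} = \tfrac{1}{d_Q}\,q_i\cdot k_j + M_{ij}$ to $\rms$ norms via Cauchy--Schwarz, $|q_i\cdot k_j|\le d_Q\norm{q_i}_{\rms}\norm{k_j}_{\rms}$, so that the factor of $d_Q$ cancels the $\tfrac{1}{d_Q}$ logit scaling exactly---which is precisely the reason the architecture uses $\tfrac{1}{d_Q}$ rather than $\tfrac{1}{\sqrt{d_Q}}$. The upshot is the clean bound $|\Delta s_{ij}| \le \norm{\Delta q_i}_{\rms}\norm{k_j}_{\rms}$ for a $q$-perturbation, with the symmetric statement for a $k$-perturbation.

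To close, I would treat the rewritten sum as a weighted covariance under the probability measure $p_i$ and apply a coordinatewise Cauchy--Schwarz,
\begin{equation*}
    \Bigl\|\sum_j p_{ij}(\Delta s_{ij} - \bar{\Delta s}_i)(v_j - (Pv)_i)\Bigr\|_{\rms} \le \sqrt{\mathrm{Var}_{p_i}(\Delta s)}\;\sqrt{E_{p_i}\norm{v_j - (Pv)_i}_{\rms}^2},
\end{equation*}
dominating each factor by a raw second moment: $\sqrt{\mathrm{Var}_{p_i}(\Delta s)} \le \max_j|\Delta s_{ij}|$, and, since $(Pv)_i = E_{p_i}[v_j]$ minimizes squared distance, $\sqrt{E_{p_i}\norm{v_j - (Pv)_i}_{\rms}^2} \le \sqrt{E_{p_i}\norm{v_j}_{\rms}^2} \le \norm{v}_{\infty\rms}$. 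Taking a max over $i$ yields the $q$- and $k$-piece bounds $\norm{v}_{\infty\rms}\norm{k}_{\infty\rms}\norm{\Delta q}_{\infty\rms}$ and $\norm{v}_{\infty\rms}\norm{q}_{\infty\rms}\norm{\Delta k}_{\infty\rms}$, and summing with the $v$-piece the three coefficients $1$, $\norm{v}\norm{k}$, $\norm{v}\norm{q}$ are each bounded by $\max\bigl(1, \norm{v}\max(\norm{q},\norm{k})\bigr)$, proving the claim. The delicate step---and the main obstacle---is this last one: a naive triangle-inequality bound on $\norm{v_j - (Pv)_i}_{\rms}$ would pay an extra factor of $2$ and spoil the stated constant, so the covariance form together with the variance-minimization property of the mean is essential.
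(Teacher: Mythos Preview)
Your proposal is correct and follows the same overall architecture as the paper's proof: both split $\nabla F$ into $A\,\Delta v + (\Delta A)\,v$, dispatch the first piece via row-stochasticity of $A$, and bound the second piece to obtain $\norm{v}\norm{k}\norm{\Delta q} + \norm{v}\norm{q}\norm{\Delta k}$. The difference lies in how the second piece is handled. The paper invokes an external result (Equation~E.58 of the modular norm paper) stating $\norm{\Delta A}_{\infty-\mathsf{op}} \le \norm{\Delta q}\norm{k} + \norm{q}\norm{\Delta k}$ and then uses $\norm{(\Delta A)v} \le \norm{\Delta A}_{\infty-\mathsf{op}}\norm{v}$; you instead give a self-contained derivation, expanding the softmax Jacobian, recentering by $(Pv)_i$, and applying a coordinatewise Cauchy--Schwarz/variance-minimization argument directly on the output $\rms$ norm. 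Your concern about the factor of $2$ is well-placed: the cited E.58 also needs a centering trick (applied there to $k$ or $q$ rather than to $v$) to avoid it, so the two routes are really the same idea applied at different points in the chain. Your version has the advantage of being explicit and not relying on an external reference.
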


\begin{proof}
    The argument mirrors the proof of Proposition 7 from the modular norm paper \citep{large2024}. We write the attention matrix as $A = \softmax\left(\tfrac{1}{d_Q} qk^\top + M\right)$. Its derivative is $\Delta A = \nabla_{(q,k)} \softmax\left(\tfrac{1}{d_Q} qk^\top + M\right) \diamond (\Delta q, \Delta k)$. The derivative of $F$ splits into two terms, \begin{equation}
        \label{eq:attention-lipschitz}
        \nabla F(q, k, v) \diamond (\Delta q, \Delta k, \Delta v) = A (\Delta v) + (\Delta A) v.
    \end{equation} We call the maximum $\ell_1$ norm of the rows of a matrix its $L^\infty$ operator norm, which comes into play by observing that $\norm{Ax}_{\infty\rms} \leq \norm{A}_{\infty-\mathsf{op}} \norm{x}_{\infty\rms}$. For the first term, note that $\norm{A}_{\infty-\mathsf{op}} = 1$ because softmax ensures the row-wise sum is always $1$. For the second term, \citet{large2024} in Equation E.58 show that \begin{equation}
        \norm{\Delta A}_{\infty-\mathsf{op}} \leq \norm{\Delta q}_{\infty\rms} \norm{k}_{\infty\rms} + \norm{q}_{\infty\rms} \norm{\Delta k}_{\infty\rms}.
    \end{equation}
    Thus, writing $\norm{\cdot}$ as shorthand for $\norm{\cdot}_{\infty \rms}$,
    \begin{align*}
        \norm{\nabla F(q, k, v) \diamond (\Delta q, \Delta k, \Delta v)}
            &= \norm{A (\Delta v)} + \norm{(\Delta A) v} \\
            &\leq \cancel{\norm{A}_{\infty-\mathsf{op}}} \norm{\Delta v} + \norm{\Delta A}_{\infty-\mathsf{op}} \norm{v} \\
            &\leq \norm{\Delta v} + \norm{v}\norm{k} \norm{\Delta q} + \norm{v}\norm{q} \norm{\Delta k} \\
            &\leq \norm{\Delta v} + \norm{v}\max(\norm{q}, \norm{k})(\norm{\Delta q} + \norm{\Delta k}) \\
            &\leq \max(1, \norm{v}\max(\norm{q}, \norm{k})) (\norm{\Delta q} + \norm{\Delta k} + \norm{\Delta v})
    \end{align*}
    Hence, $\norm{\nabla F(q, k, v) \diamond (\Delta q, \Delta k, \Delta v)} \leq \max(1, \norm{v}\max(\norm{q}, \norm{k})) \norm{(\Delta q, \Delta k, \Delta v)}$ as claimed.
\end{proof}

More generally for attention layers with attention scale $s_{\text{attn}}$ not necessarily equal to $\tfrac{1}{d_Q}$, we can absorb the extra factor to the query weight $W_Q$ \textit{and} key weight $W_K$, such that \begin{equation} \tilde{F} = \softmax\left(s_{\text{attn}} qk^T+M\right)v = \softmax\left(\frac{1}{d_Q} \tilde{q}\tilde{k}^T+M\right)v, \end{equation} where $\tilde{q} = \sqrt{s_{\text{attn}}d_Q} q$ and $\tilde{k} = \sqrt{s_{\text{attn}}d_Q} k$. The Lipschitz bound then is, \begin{equation} \max(1, \norm{v} \max(\norm{\tilde{q}}, \norm{\tilde{k}})) = \sqrt{s_{\text{attn}}d_Q} \max(1, \norm{v} \max(\norm{q}, \norm{k})). \end{equation}

\textbf{Step 4. Activation norm bounds.} To apply the theorem, we now bound the input norm to attention. To do so we will track the maximum $\rms$ norm of activations everywhere in the network. We do not use layer norm and therefore cannot reset activation norms to 1. Let $x_0, \dots, x_{2N}$ denote all the activations, from the initial embedding $x_0$ through to the $N$ alternating attention and MLP blocks acting via residual connections. Suppose the embedding layer maps tokens to have $\rms$ norm at most $1$, or $\norm{x_0}_{\infty\rms} \leq 1$. Attention and MLP increase the norm as follows:
\begin{itemize}
    \item Attention computes $W_\mathsf{out} \circ (V, A)$ for some attention matrix $A$, where $(V, A)$ is shorthand for functional attention. By definition $V$ cannot increase the $\rms$ norm of the embedding $x_i$ at any token by more than its $\rms\to\rms$ operator norm, meaning $\norm{V x_i}_{\infty\rms} \leq \norm{V}_{\rms\to\rms} \norm{x_i}_{\infty\rms}$. The same bound applies to $(V, A)x_i$ by subadditivity of norms, since entries of the attention matrix $A$ sum to $1$ in the token dimension. Therefore attention can increase the activation norm by \begin{equation}
        \norm{(W_\mathsf{out} \circ (V, A))x_i}_{\infty\rms} \leq \norm{W_\mathsf{out}}_{\rms\to\rms} \norm{V}_{\rms\to\rms} \norm{x_i}_{\infty\rms}.
    \end{equation} In words, multiply the weight norms of $W_\mathsf{out}$ and $V$ to get the maximum increase.
    \item The MLP computes $W_\mathsf{out} \circ (\mathsf{GeLU} / 1.1289) \circ W_\mathsf{in}$. Therefore the MLP can increase activation norm by $\norm{W_\mathsf{out}}_{\rms\to\rms} \norm{W_\mathsf{in}}_{\rms\to\rms} / 1.1289$, since $|\mathsf{GeLU}(x)| \leq |x|$ for all $x \in \R$.
    \item The residual connection acts like \begin{equation}
        \norm{(1 - \alpha) \cdot x_i + \alpha \cdot \mathsf{block}(x_i)}_{\infty\rms} \leq (1 - \alpha) \norm{x_i}_{\infty \rms} + \alpha \norm{\mathsf{block}(x_i)}_{\infty\rms}.
    \end{equation}
\end{itemize}

\textbf{Algorithm to compute Lipschitz bound.} Therefore, given the weight norms of all matrices in a transformer, we use the preceding results to compute its Lipschitz bound in two steps. First, we upper bound the activation norm everywhere in the network using Step 4. Second, we upper bound the Lipschitz constant using Steps 1-3. The Lipschitz bound after the final layer is what we refer to as the transformer's Lipschitz upper bound.

\clearpage
\section{Implementing LipsFormer and bounding its Lipschitz constant}

To turn our enforced norm training into LipsFormer \citep{qi2023lipsformer}, we make the following changes:

\begin{enumerate}
    \item Remove spectral soft cap and embed projections.
    \item Use CenterNorm: mean subtraction with learnable entrywise scale and bias.
    \item Use scaled-head cosine attention with $\epsilon = 10^{-6}$, $\tau = 12$, $\nu = 1$. Notably, the \href{https://github.com/IDEA-Research/LipsFormer/blob/main/models/lipsformer_swin.py#L205-L206}{official implementation} of LipsFormer uses $\epsilon = 0$. According to their Theorem 1, this choice may make a finite Lipschitz bound impossible. We set $\epsilon > 0$ to fix the issue.
    \item Heuristically scale down attention output by $1/n_\text{heads}$ to match their implementation.
    \item Insert residual connections with learnable strength $\alpha$, initialized to $1/n_\text{residual\_connections}$.
    \item Xavier normal initialize linear layers, then apply spectral normalization $W \mapsto W / \norm{W}_\ast$.
    \item Include drop path: every residual connection is skipped with $p=0.5$ and, if taken, is scaled up by $1/(1-p)$, matching their official implementation which uses \verb|nn.Dropout|.
    \item Use weight decay 0.1, matching their implementation (not applied to scalar parameters).
    \item Use the Muon optimizer to give LipsFormer the fairest comparison, copying hyperparameters from our run. We tested training with AdamW for all parameters, an exact replication, but found performance degraded sigificantly: after 1770 steps, validation loss was 4.86 (compared to 3.61) and validation accuracy was 0.227 (compared to 0.301).
    \item For non-weight-matrix parameters, use Adam hyperparameters $\eta = 0.001$, $\beta_1 = 0.9$, $\beta_2 = 0.999$, $\epsilon = 10^{-8}$ to match their implementation.
    \item Use cosine learning rate schedule with decay to 0 to match their implementation.
\end{enumerate}

\textbf{Bounding the Lipschitz constant of LipsFormer.} In \cref{tab:nanogptspeedrun}, we report that our trained implementation of LipsFormer has a Lipschitz upper bound of $10^{130}$. To calculate this value, we use the final weight norms of the MLP and attention blocks to bound the Lipschitz constant of each residual block, relying on LipsFormer's Theorem 1:
\[
    \mathsf{Lip}(\text{SCSA})_2 \leq 2N(N-1)\nu\tau\epsilon^{-\tfrac12}\norm{W^K}_2 + 2(N-1)\nu\tau\epsilon^{-\tfrac12}\norm{W^Q}_2 + 2N\nu\epsilon^{-\tfrac12}\norm{W^V}_2.
\]
Using $N = 128$ (head dimension), $\tau = 12$, $\nu = 1$, and empirical weight norms, we calculate the Lipschitz bound for every layer. We use the maximum entry of the learned residual strength $\alpha$, which is an entrywise multiplication, to convert the layerwise bounds into a final bound \[
    \mathsf{Lip}(F) \leq \prod_{s=1}^S \prod_{m=1}^S (1 + \alpha_{s,m} \mathsf{Lip}(f_{s,m})),
\] which we take from their Equation 19. Alpha has typical maximum entries around $0.5$ for attention connections and $0.15$ for MLP connections. With $\epsilon = 10^{-6}$, we compute a final Lipschitz bound of $1.97 \times 10^{129}$.

\clearpage
\section{Experimental details}
\label{app:experimental-details}

This section gives experimental details for all results in the paper. The three categories of experiments we run are MLP training, Shakespeare transformer training, and NanoGPT speedrun training.

\textbf{Datasets.}
\begin{itemize}
    \item For MLP training we use the CIFAR-10 dataset \cite{krizhevsky2009cifar} with the standard train and test splits and no data augmentation. We do not shuffle the order of batches.
    \item For Shakespeare transformer training we use Karpathy's 1M character-level dataset with standard training and validation splits \citep{karpathy_nanoGPT}. We shuffle the order of batches.
    \item For NanoGPT speedrun transformer training we use the FineWeb10B dataset \citep{penedo2024finewebdatasetsdecantingweb} loaded in the standard order. We use the same validation split as the modded NanoGPT speedrun benchmark \citep{modded_nanogpt_2024}.
\end{itemize}

\textbf{Compute requirement.} All our experiments can run on a V100, A100, or H100 GPU in less than 5 minutes, except the NanoGPT speedrun transformer which requires 8xH100 and runs in 5-10 minutes.

\textbf{Modula library.} For MLP and Shakespeare experiments, we use JAX \citep{jax2018github} on top of the Modula library \citep{large2024,modula-docs}. We implement our own model components. Our AdamW implementation does not include bias correction, although the discrepancy decays rapidly after aronud 20 steps because we use $\beta_1 = 0.9, \beta_2 = 0.95$ in all experiments except one, not reported, in which we determine that this is a good setting for the momentum EMAs.

\textbf{MLP experiments.} All MLPs we train are width 256 and depth 3 (i.e., one hidden layer) with $\mathsf{ReLU}$ activations and no bias on data from CIFAR-10. We use batch size 512 and a linear learning rate schedule that decays to 0 in all experiments. Modula's mass calculation causes the effective learning rate to be scaled by $1/3$. We train for 50 epochs except in one case, when we train for 20 epochs for the models in \cref{fig:accuracy_vs_epsilon}. We zero-initialize the final layer. We train all models in float32 precision and run the weight constrain methods in float32 precision. We experimented with lower precision and found comparable metrics across the board for bfloat16 training. We set seed 0 and store all hyperparameters and log information to enhance reproducibility.

\textbf{Shakespeare experiments.} All transformers we train for Shakespeare are width 256 with 3 blocks (attention + MLP), no bias, and four attention heads. The out projection in each attention and MLP block is initialized to zero. We use sequence length 256 and batch size 64 to match the baseline from \citep{karpathy_nanoGPT}, except we train for 2000 steps while Karpathy trains for 5000 steps. We set Modula's blocks mass parameter to 32 to cause 95\% of the feature learning to occur in the transformer blocks. We determined this ratio by sweeping the blocks mass, which controls the ratio of learning rate between the two embedding layers and the transformer blocks. Training with Muon means applying Muon to all linear layer weight matrices (including the final logit head) but normalizing the columns of embedding gradient, as suggested by the $\ell_1 \to \rms$ duality map \citep{bernstein2024modulardualitydeeplearning}. We were concerned that rare tokens may cause the momentum buffer to dualize columns to full strength updates for hundreds of steps until the column decays to exactly zero, so we tested whether capping the maximum inflation factor for the embedding column normalization could help. We tested maximum factors in the set $\{1, 4, 16, \dots, 65536\}$ across 8 seeds and found no significant difference. We choose to maximally multiply each column by 16 during the dualization step. Finally, we found that to train to the validation losses reported we had to use a trick: we decayed the learning rate by a factor of 1/2 per residual layer, causing later layers to train more than earlier layers. This change is implemented by setting the sensitivity of the $\mathsf{Mul}$ module in Modula to 1. We do not know why this trick is necessary.

\cref{fig:pareto} sweeps over the following hyperparameters:
\begin{itemize}
    \item MLPs on CIFAR-10: we test the following combinations of optimzer and weight constraint method: (AdamW, weight decay), (AdamW, spectral weight decay), (AdamW, spectral normalization), (AdamW, Stiefel manifold projection), (AdamW, spectral hammer),  (Muon, weight decay), (Muon, spectral weight decay), (Muon, spectral normalization), (Muon, stiefel manifold projection), (Muon, spectral hammer), (Muon, spectral soft cap), (Muon, spectral hard cap). For AdamW, we vary the weight decay and spectral weight decay parameters with 10 points in log-space from $10^{-2}$ to $10^{0}$. For Muon we vary the weight decay parameter with 10 points in log-space from $10^{-3}$ to $10^{0}$ and the spectral weight decay parameter with 10 points in log-space from $10^{-2}$ to $10^{0}$. For AdamW with spectral normalization, Stiefel manifold projection, and spectral hammer, we vary the maximum weight norm in the set $\sigma_\text{max} \in \{2, 3, 4, 5, 6, 7, 8\}$. For Muon with spectral normalization, Stiefel manifold projection, spectral soft cap, and spectral hard cap, we vary the maximum weight norm in the set $\sigma_\text{max} \in \{4, 5, 6, 7, 8, 9, 10\}$. For Muon with spectral hammer we vary the maximum weight norm in the set $\sigma_\text{max} \in \{1, 2, 3, \ldots , 10\}$. For AdamW with all methods we sweep 16 learning rates in log-space between $10^{-5}$ and $10^{-0.5}$. For Muon we sweep 16 learning rates in log-space between $10^{-2}$ and $10^{1}$ for all methods except for spectral hammer, where we use these learning rates for $\sigma_\text{max} \in \{4, 5, 6, 7, 8, 9, 10\}$, and use 16 learning rates in log-space between $10^{-3}$ and $10^{0}$ for  $\sigma_\text{max} \in \{1, 2, 3\}$. Overall, this results in 1,610 total combinations, 682 with AdamW and 928 with Muon.
    \item Transformers on Shakespeare: we test the following combinations of optimizer and weight constraint method: (AdamW, weight decay), (AdamW, spectral normalize), (AdamW, spectral hammer), (Muon, weight decay), (Muon, spectral normalize), (Muon, spectral soft cap). For spectral normalize, spectral hammer, and spectral soft cap, we vary the maximum weight norm in the set $\sigma_\text{max} \in \{1.0, 1.2, \dots, 2.8, 3.0\}$. For the baseline, we vary weight decay in the set $\lambda \in \{2/3, 0.5, 0.4, 0.3, 0.2, 0.1, 0.05, 0.03, 0.01, 0\}$. For AdamW we sweep 16 learning rates between $10^{-4.5}$ and $10^{-1.5}$. For Muon, we sweep 12 learning rates between $10^{-1.5}$ and $10^{1.5}$. We ran tests before to find ranges that cover the optimal learning rate.
\end{itemize}

\cref{fig:accuracy_vs_epsilon} reports adversarial examples and dataset-wide statistics from two models trained for 20 epochs. The AdamW model is trained with learning rate $8.1 \times 10^{-3}$ and weight decay $\lambda = 0.1$. The Muon model is trained with learning rate $2.3 \times 10^{-1}$ and weight decay $\lambda = 0$, using the spectral soft cap method with a weight constraint of $\sigma_\text{max} = 3$.

The left panel of \cref{fig:fullmuoncomp} visualizes the same data from the experiment for \cref{fig:pareto}, but focuses only on MLPs trained with Muon on CIFAR-10. The middle and right panels use the Muon optimizer, with the following tuples of (weight constraint method, maximum singular value, weight decay, spectral weight decay, learning rate): (weight decay, N/A, 0.1, 0, 1.585), (spectral weight decay, N/A, 0, 0.05, 0.157), (spectral normalization, 6, 0, 0, 1.0), (Stiefel manifold projection, 5, 0, 0, 1.0), (spectral hammer, 4, 0, 0, 0.398), (spectral soft cap, 6, 0, 0, 0.398), (spectral hard cap, 5, 0, 0, 0.631).

\textbf{NanoGPT experiments.} Following the Modded-NanoGPT speedrun standard \citep{modded_nanogpt_2024}, our training runs print log files with the full source code required to reproduce the results. We briefly summarize the changes we made to convert the NanoGPT speedrun record (as of May 2025) into our method:
\begin{itemize}
    \item Every step, $\rms$ normalize the embedding columns.
    \item Initialize all linear layer weight matrices to be orthogonal.
    \item Reparameterize residual connections according to \cref{eq:breaking-barrier-convex-combination}: $\tfrac{L-1}{L} x + \tfrac{1}{L} \mathsf{block}(x)$ residual connections, where $L = 24$ is the number of residual connections.
    \item Reparameterize attention according to \cref{eq:breaking-barrier-softmax-1-d}: $\tfrac{1}{3}$ overall scale on the attention output and $1/d_\text{head}$ scale inside the softmax.
    \item Every step, apply spectral soft cap (or spectral normalize) to every linear layer weight matrix based on a prespecified maximum desired weight norm $\sigma_\text{max}$.
    \item Use different orthogonalization coefficients that at most inflate a singular value to 1.14502. Therefore, the maximum update norm we pass to the strength parameter solver for learning rate coupling in spectral soft cap is $\eta \cdot 1.14502 \cdot 1.05$ with an extra factor of $1.05$ to be safe around numerical precision errors. The iteration is derived by modifying the method in \citep{cesista2025muonoptcoeffs}.
    \item Remove U-net structure.
    \item Use $\mathsf{GeLU} / 1.1289$ instead of $\mathsf{ReLU}^2$.
    \item Switch the dimension scaling in Muon to be $\sqrt{\mathsf{fan\_{out}} / \mathsf{fan\_{in}}}$ instead of $\max(1, \sqrt{\mathsf{fan\_{out}} / \mathsf{fan\_{in}}})$.
    \item Remove $\rms$ normalization: the model is now Lipschitz continuous.
    \item Add back the 7th attention layer (which was removed in the speedrun).
    \item Weight projections are run in bfloat16 (which we found to slightly improve performance). Spectral normalization uses 2 iterations, meaning that weight norms can exceed the specified maximum $\sigma_\text{max}$ due to approximation error; in practice weights with norms enforced by spectral normalization exceed the specified maximum by around 10\%.
\end{itemize}

\end{document}